\documentclass[11pt]{article}

\usepackage[final]{acl}

\usepackage{times}
\usepackage{latexsym}

\usepackage[T1]{fontenc}
\usepackage[utf8]{inputenc}

\usepackage{microtype}

\usepackage{inconsolata}

\usepackage{graphicx}

\usepackage{algorithm}
\usepackage{algorithmic}
\usepackage{booktabs}
\usepackage{tabularx}
\usepackage{geometry}
\usepackage[table]{xcolor}
\usepackage{amsmath}
\usepackage{amsthm}
\usepackage{adjustbox}
\usepackage{multirow}
\usepackage{authblk}
\usepackage{hyperref}
\usepackage{chngcntr}
\usepackage{orcidlink}
\usepackage{wrapfig}
\usepackage{amssymb}
\usepackage{cleveref}
\usepackage[dvipsnames]{xcolor}

\newtheorem{proposition}{Proposition}
\newtheorem{lemma}{Lemma}

\AddToHook{cmd/appendix/before}{\crefalias{section}{appendix}}
\usepackage{subcaption}
\usepackage{dblfloatfix}
\usepackage{lipsum}% http://ctan.org/pkg/lipsum
\title{IWP: Token Pruning as Implicit Weight Pruning in Large Vision Language Models}

\author{
 \textbf{Dong-Jae Lee}\thanks{Equal contribution.} \quad
 \textbf{Sunghyun Baek}$^*$ \quad
 \textbf{Junmo Kim} \quad
\\
 \textsuperscript{}KAIST
\\
    \texttt{\{jhtwosun,
    baeksh,
    junmo.kim\}@kaist.ac.kr}
}

\begin{document}
\maketitle
\begin{abstract}
Large Vision–Language Models (LVLMs) show impressive performance across image and video understanding tasks, yet their computational cost grows rapidly with the number of visual tokens. 
% Existing token pruning methods mitigate this issue through empirical approaches while overlooking the internal mechanism of attention.
Existing token pruning methods address this issue by reducing the number of visual tokens, but overlook the internal mechanism of attention.
% Existing token pruning methods overlooks the internal mechanism of attention.
In this paper, we propose a novel training-free token pruning framework grounded in the dual-form perspective of attention. We reformulate attention as an implicit linear layer whose weight matrix is the sum of rank-1 outer products, generated by a single token's key–value pair. 
% Token pruning thus reduces to selecting an optimal subset of these rank-1 updates that best approximates the original dual weight matrix. 
Token pruning thus reduces to identifying an optimal subset of these rank-1 updates that retains the essential information of the original dual weight matrix.
Extending this perspective to standard softmax attention in LVLMs, we derive two novel metrics quantifying a token's information magnitude and information duplication, respectively.
To efficiently select the subset with the proposed metrics, we introduce Progressive Chunked Maximal Marginal Relevance. Extensive experiments demonstrate that our method achieves a better trade-off between performance and efficiency, while providing another perspective on existing pruning approaches. Code is available at: \url{https://github.com/jhtwosun/IWP}.
\end{abstract}

\section{Introduction}
\label{sec:intro}

Recent advances in Large Vision-Language Models (LVLMs) \cite{liu2023visual, li2025llavaonevision, wang2024qwen2, yang2025qwen25} have significantly enhanced multimodal understanding. To achieve fine-grained perception, there has been a notable shift towards processing high-resolution images and extended videos. 
However, this trend inherently generates an excessive number of visual tokens. 
% Since the computational cost of self-attention scales quadratically with sequence length, this surge in visual tokens leads to prohibitive computational and memory overheads, particularly during the prefill phase and KV cache storage. 
Since the computational cost of attention scales quadratically with sequence length, this surge in visual tokens leads to prohibitive computational and memory overheads.
To alleviate this computational bottleneck, token pruning \cite{bolya2023tome}, originally developed for accelerating standalone vision transformers \cite{dosovitskiy2021an}, has emerged as a prominent solution for LVLMs \cite{chen2024fastv, dhouib2025pact, yang2025visionzip, alvar2025divprune, lin2025vtw}.
Existing methods typically regard attention scores as token importance scores \cite{chen2024fastv, yang2025visionzip, dhouib2025pact}, enforce token diversity to minimize redundancy \cite{dhouib2025pact, alvar2025divprune}, or combine both objectives \cite{zhang2025cdpruner, deng2025scope}. Despite their remarkable performance, it remains unclear how token pruning affects the internal mechanism of attention.
% In particular, attention-based methods capture only the query–key alignment while overlooking the contribution of value, and diversity-based methods measure redundancy in the raw feature space rather than in the space where attention actually constructs its output.

In this paper, we aim to establish a principled token pruning framework that departs from empirical approaches.
To this end, we adopt the dual-form perspective of linear attention \cite{irie2022dual, dai2023can, ren2024towards}, which reformulates attention as an implicit linear transformation whose weight matrix is constructed from key–value pairs and applied to the query to produce the output.
Under this perspective, the implicit weight matrix can be decomposed into token-wise update matrices, each represented as the outer product of the corresponding key and value.
% Consequently, token pruning reduces to identifying the subset whose summation best preserves the information of the original dual weight matrix.
Consequently, token pruning reduces to identifying the subset whose summation maintains the essential information of the original dual weight.

Building on this perspective, we extend the dual-form analysis to standard softmax attention in LVLMs and derive a novel, training-free token selection algorithm that captures both information magnitude and information redundancy. Specifically, we quantify each token's contribution to the dual weight matrix and measure the redundancy among token-induced updates. We then select the tokens that maximize information magnitude while minimizing this redundancy. To this end, we introduce Progressive Chunked Maximal Marginal Relevance (PC-MMR), which performs token selection at the chunk level and progressively expands the number of selected tokens. 
% By integrating our novel dual-form based metric with PC-MMR, our approach efficiently selects a subset of tokens that best preserves the information of the original dual weight matrix.
By integrating our novel dual-form based metric with PC-MMR, our approach efficiently selects a subset of tokens that retains the essential information of the original dual weight matrix.

Experiments on LVLMs across image and video benchmarks show that our pruning strategy consistently outperforms prior methods in both accuracy and efficiency.
These results suggest that the dual-form perspective provides a faithful lens for analyzing token importance in LVLMs and yields a better accuracy–efficiency trade-off.
% \clearpage

\section{Related Work and Problem Definition}
\subsection{Related Work}
\label{sec:rel works}
LVLMs \cite{liu2023visual, li2025llavaonevision, wang2024qwen2, yang2025qwen25} process extensive sequences of visual tokens, resulting in substantial computational overhead due to the quadratic complexity of attention. 
To mitigate this, prior works try to reduce the length of the visual tokens.
Methods such as FastV \cite{chen2024fastv}, PyramidDrop \cite{xing2024pyramiddrop}, SparseVLM \cite{zhang2025sparsevlm}, and PACT \cite{dhouib2025pact} select image tokens based on attention scores or query-token interactions of LLM. 
Meanwhile, approaches like VisionZip \cite{yang2025visionzip} and VisPruner \cite{zhang2025vispruner} rely on vision encoder attention statistics to measure token importance.

Beyond relying solely on attention score, several works \cite{dhouib2025pact, alvar2025divprune, wen2025dart} focus on mitigating redundancy among visual tokens. For instance, DivPrune \cite{alvar2025divprune} enforces diversity through minimum-distance constraints applied to the cosine similarity of hidden states.
More recently, hybrid approaches jointly optimize both token importance and redundancy. 
CDPruner \cite{zhang2025cdpruner} introduces conditional diversity maximization to balance redundancy with instruction relevance. 
SCOPE \cite{deng2025scope} jointly considers saliency and coverage to better preserve the semantic completeness of the selected token subset.
Similarly, PACT \cite{dhouib2025pact} leverages attention scores for token importance, while utilizing key cosine similarity to guide the clustering.
PyramidDrop \cite{xing2024pyramiddrop} progressively drops visual tokens with a pre-defined ratio in LVLM, and V2Drop \cite{v2drop} extends this approach by removing tokens that exhibit minimal variation across layers.
V2Drop \cite{v2drop} progressively removes visual tokens that exhibit minimal variation across LLM layers, while ZOO-Prune \cite{zooprune} applies zeroth-order perturbations to a lightweight projection layer to identify and selectively prune the most sensitive tokens in a training-free manner.

% Despite the impressive empirical success of visual token pruning in LVLMs, existing attention- or diversity-based approaches do not explicitly account for the underlying attention mechanism. 
% To overcome these limitations, we reformulate self-attention in its dual form. 
% This formulation allows us to analyze the structural contribution of each token, thereby introducing a novel, principled criterion that quantifies both importance and redundancy. 
% While prior methods have utilized the combination of attention scores and value norms for KV-cache pruning \cite{VATP}, our work formally derives this metric through the lens of dual-form attention. Building upon this dual-form perspective, we further introduce an redundancy metric and the PC-MMR mechanism.
Although visual token pruning in LVLMs has achieved impressive empirical success, existing attention- or diversity-based approaches largely rely on heuristics without explicitly accounting for the structural mechanics of attention. To address this limitation, we reformulate self-attention in its dual form. This theoretical lens allows us to formally derive token importance metrics that were previously adopted only empirically, such as the combination of attention scores and value norms used in KV-cache pruning \cite{VATP}. Building upon this unified foundation, we extend our analysis to quantify token-level information duplication, thereby introducing a novel redundancy metric and the PC-MMR mechanism for principled token selection.

\subsection{Problem Definition}
Large Vision-Language Models (LVLMs) process an image $I$ and a text $T$ to generate a response\footnote{For simplicity, we omit system prompt.}. Formally, the visual features from the encoder are mapped into the Large Language Model (LLM)'s hidden space via projection layer, yielding a sequence of $N_\text{img}$ visual tokens $\mathbf{Z}_\text{img} \in \mathbb{R}^{N_\text{img} \times d}$, where $d$ is the hidden dimension. Simultaneously, a text embedding layer maps the input text into $N_\text{text}$ tokens $\mathbf{Z}_\text{text} \in \mathbb{R}^{N_\text{text} \times d}$. Then, LLM processes the concatenated multimodal input $\mathbf{Z} =[\mathbf{Z}_\text{img} ; \mathbf{Z}_\text{text}] \in \mathbb{R}^{N \times d}$, where the total sequence length is $N = N_\text{img} + N_\text{text}$.
Given a target reduction ratio $\rho \in (0,1)$, a token reduction operator is applied at a specific LLM layer $l$ or projection layer, producing a compressed subset of $(1-\rho)N_\text{img}$ visual tokens.
The newly formed visual sequence is then concatenated with the $N_\text{text}$ text tokens and passed to subsequent transformer layers.

\section{Transformer Attention in Dual Form}
\label{sec:dual form}
Conventionally, standard attention \cite{vaswani2017attention} is viewed as an associative retrieval process, where similarity scores between a query and a set of keys determine the weights used to aggregate values. However, by exploiting the associativity of matrix multiplication, recent studies \cite{irie2022dual, dai2023can, ren2024towards} demonstrate that linear attention can be equivalently reinterpreted as a linear transformation.
For clarity, we first derive the dual form of unnormalized linear attention, where the implicit weight matrix construction from key–value pairs becomes most transparent.
% We then extend this analysis to standard attention to obtain its corresponding dual form. 
We then extend this analysis to standard attention, deriving its corresponding dual form.
For simplicity, we consider a single attention head throughout this section, as the derivation applies independently to each head. Throughout, let $\mathbf{q} \in \mathbb{R}^{1 \times d}$, $\mathbf{K} \in \mathbb{R}^{N \times d}$, and $\mathbf{V} \in \mathbb{R}^{N \times d_v}$ denote the query, keys, and values, respectively, all of which are obtained by projecting the input hidden states $\mathbf{Z}$, where $N$ denotes the number of tokens.

\subsection{Linear Attention in Dual Form}
Unnormalized linear attention is defined as
\begin{equation}
\text{LinearAttn}(\mathbf{q}, \mathbf{K}, \mathbf{V})
= \mathbf{q}\mathbf{K}^\top \mathbf{V}
= \sum_{i=1}^{N} \alpha_i \mathbf{v}_i,
\end{equation}
where $\alpha_i = \mathbf{q}\mathbf{k}_i^\top$. 
% In the primal form, attention is viewed as a weighted aggregation of values, acting as an associative retrieval mechanism.
% In this primal form, attention is a weighted sum of values.
By the associativity of matrix multiplication, this is equivalent to
\begin{equation}
\text{LinearAttn}(\mathbf{q}, \mathbf{K}, \mathbf{V})
= \mathbf{q}\mathbf{W}_N, \ %\quad
\mathbf{W}_N = \sum_{i=1}^{N} \mathbf{k}_i^\top \mathbf{v}_i,
\end{equation}
% \begin{equation}
% \text{LinearAttn}(\mathbf{q}, \mathbf{K}, \mathbf{V})
% = \mathbf{q}\mathbf{W}_N, \quad
% \mathbf{W}_N = \sum_{i=1}^{N} \mathbf{k}_i^\top \mathbf{v}_i,
% \end{equation}
where $\mathbf{W}_N \in \mathbb{R}^{d \times d_v}$, and $\mathbf{k}_i, \mathbf{v}_i$ are the $i$-th rows of $\mathbf{K}, \mathbf{V}$.
In its dual form, linear attention can be reinterpreted as a linear transformation of the query by the \textit{dual weight matrix} $\mathbf{W}_N$, constructed from the summation of outer products between key-value pairs.

As noted in prior work \cite{dai2023can, ren2024towards}, the attention update $\mathbf{k}_i^\top \mathbf{v}_i$ shares the algebraic form of a gradient-based linear layer update. In standard gradient descent with a loss $\mathcal{L}$, the weight matrix update takes the form $\mathbf{x}_i^\top \mathbf{e}_i$, where $\mathbf{x}_i \in \mathbb{R}^{1 \times d}$ is the input representation and $\mathbf{e}_i = -\nabla_{\mathbf{y}_i} \mathcal{L} \in \mathbb{R}^{1 \times d_v}$ is the learning signal where $\mathbf{y}_i \in \mathbb{R}^{1 \times d_v}$ is the output corresponding to the $\mathbf{x}_i$. This results in an accumulated weight update expressed as $\mathbf{W}_N = \sum_N \mathbf{x}_i^\top \mathbf{e}_i$.
Under this correspondence, each key $\mathbf{k}_i$ functions as an input representation of train sample, and each value $\mathbf{v}_i$ serves as the corresponding learning signal. 
Consequently, linear attention can be interpreted as an implicit optimization process in which each token contributes an update $\mathbf{k}_i^\top \mathbf{v}_i$ to the dual weight matrix $\mathbf{W}_N$, satisfying
\begin{equation}
\mathrm{rank}(\mathbf{k}_i^\top \mathbf{v}_i) = 1, 
\quad \forall i \in \{1,\dots,N\}.
\end{equation}
Therefore, each token contributes a rank-1 update to the dual weight matrix.\footnote{For simplicity, we refer to the dual weight matrix $\mathbf{W}_N$ as the dual weight in the remainder of this paper.}
% Consequently, linear attention can be interpreted as an implicit optimization process in which each token contributes a rank-1 update $\mathbf{k}_i^\top \mathbf{v}_i$ (where $\text{rank}(\mathbf{k}_i^\top \mathbf{v}_i) = 1$) to the dual weight matrix $\mathbf{W}_N$. 

\subsection{Softmax Attention in Dual Form}
Standard softmax attention is defined as
\begin{align}
\text{Attn}(\mathbf{q}, \mathbf{K}, \mathbf{V})
&= \text{softmax}\!\left(\tfrac{\mathbf{q}\mathbf{K}^\top}{\sqrt{d}}\right)\mathbf{V} \nonumber \\
&= \frac{\sum_{i=1}^{N} \exp(\mathbf{q}\mathbf{k}_i^\top/\sqrt{d})\,\mathbf{v}_i}{\sum_{j=1}^{N} \exp(\mathbf{q}\mathbf{k}_j^\top/\sqrt{d})}.
\end{align}
Let
$\kappa(\mathbf{x},\mathbf{y}) = \exp\!\bigl(\mathbf{x}\mathbf{y}^\top/\sqrt{d}\bigr)$
for $\mathbf{x}, \mathbf{y} \in \mathbb{R}^{1 \times d}$ denote the exponential kernel.
Since $\kappa$ is positive definite, there exists a Reproducing Kernel Hilbert Space (RKHS)
$\mathcal{H}$ and a feature map
\(
\phi : \mathbb{R}^{1 \times d} \to \mathcal{H}
\)
such that
\(
\kappa(\mathbf{x},\mathbf{y})
=
\langle \phi(\mathbf{x}), \phi(\mathbf{y}) \rangle_{\mathcal{H}}.
\)
Substituting the kernel expansion yields
\begin{equation}
\text{Attn}(\mathbf{q}, \mathbf{K}, \mathbf{V})
= \eta_N(\mathbf{q}) \sum_{i=1}^{N} \langle \phi(\mathbf{q}), \phi(\mathbf{k}_i) \rangle_{\mathcal{H}}\, \mathbf{v}_i,
\end{equation}
where $\eta_N(\mathbf{q}) = \bigl(\sum_{j=1}^{N}\kappa(\mathbf{q},\mathbf{k}_j)\bigr)^{-1}$.
Although the RKHS $\mathcal{H}$ is generally infinite-dimensional, we assume $\phi(\mathbf{x}) \in \mathbb{R}^{1 \times m}$ for some finite $m$ to derive the dual form (see \Cref{app:hs_infinite_features} for the formal statement of RKHS). Under this assumption, the inner product reduces to $\phi(\mathbf{x})\phi(\mathbf{y})^\top$. Factoring out $\phi(\mathbf{q})$ gives the dual form:
\begin{equation}
\label{eq:softmax dualform}
\text{Attn}(\mathbf{q}, \mathbf{K}, \mathbf{V})
= \phi(\mathbf{q})\,\eta_N(\mathbf{q})\,\mathbf{W}_N,
\end{equation}
where $\mathbf{W}_N = \sum_{i=1}^{N} \phi(\mathbf{k}_i)^\top \mathbf{v}_i$ is the accumulated dual weight built from rank-1 outer products. Each token induces a rank-1 update
\begin{equation}
\label{eq:softmax rank1 update}
\Delta \mathbf{W}_i = \phi(\mathbf{k}_i)^\top \mathbf{v}_i, \quad
\mathbf{W}_N = \sum_{i=1}^{N} \Delta \mathbf{W}_i.
\end{equation}
The key difference from linear attention is that $\mathbf{W}_N$ is constructed using the kernel-induced feature representations, mapping inputs from the Hilbert feature space $\mathcal{H}$ back to the original value space $\mathbb{R}^{d_v}$.
Notably, the \textit{effective dual weight} in softmax attention is given by the normalized form $\eta_N(\mathbf{q})\mathbf{W}_N$, reflecting the query-dependent scaling. 
\begin{figure*}[tb]
  \centering
  % \resizebox{\textwidth}{!}{\includegraphics[]{figs/test5.pdf}
  \resizebox{\textwidth}{!}{\includegraphics[]{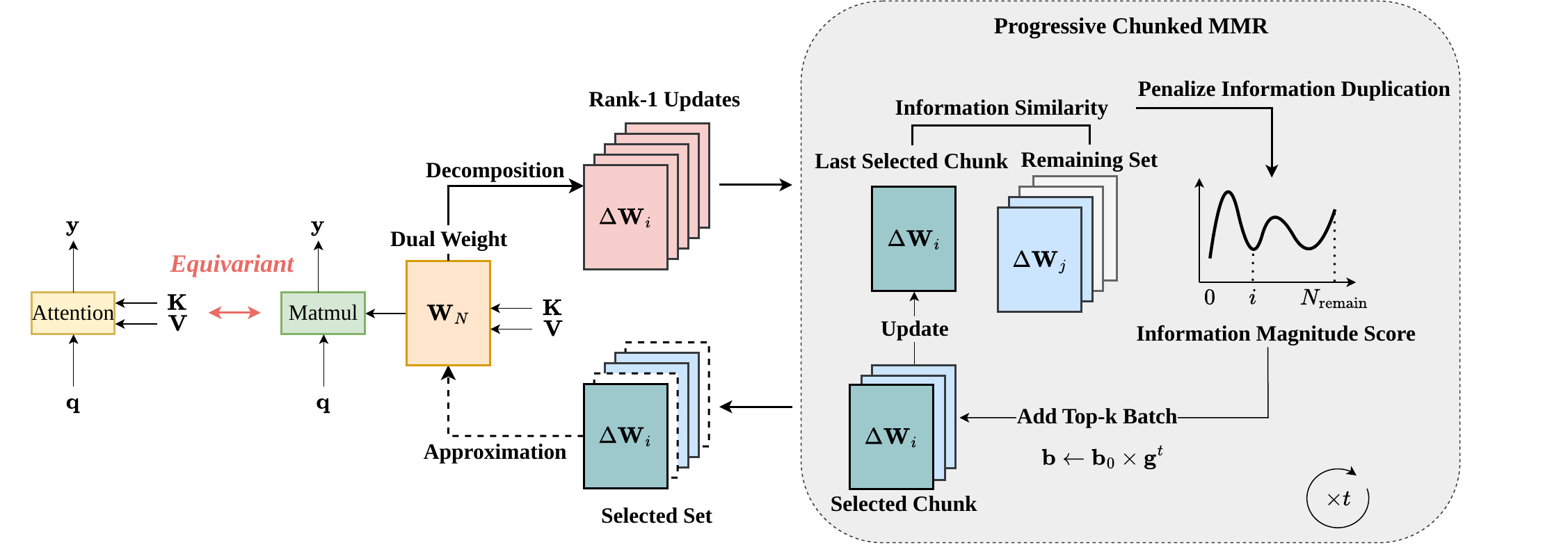}
  }
  \caption{\textbf{Overview of the Dual-Form Token Pruning framework.} Softmax attention is reinterpreted as a dual form via kernel mapping, where tokens generate rank-1 updates $\Delta \mathbf{W}_i = \phi(\mathbf{k}_i)^\top \mathbf{v}_i$. The Progressive Chunked MMR loop filters tokens based on information magnitude and duplication to efficiently approximate the dual weights. For simplicity, the normalization term $\eta_N(\mathbf{q})$ is omitted.
  }
  \label{fig:overview}
  \vspace{-2mm}
\end{figure*}

\section{Token Pruning in the Dual Form Perspective}
\label{sec:method}
Building upon this dual perspective, we define token informativeness through the lens of rank-1 weight updates. Specifically, the contribution of an individual token $i$ is encapsulated by its corresponding rank-1 update, $\Delta \mathbf{W}_i = \phi(\mathbf{k}_i)^\top \mathbf{v}_i$. We can therefore assess an individual token's informativeness by evaluating the information gain from its update to the dual weight, using two distinct criteria: \textbf{information magnitude} and \textbf{information duplication}. 
First, information magnitude is determined by the magnitude of the rank-1 update. A negligible magnitude (e.g., $\|\Delta\mathbf{W}_i\| \approx 0$) indicates that the token lacks salient information and has minimal influence on the dual weight; consequently, such tokens can be removed with minimal impact on performance. 
Second, information duplication is determined by the similarity between the rank-1 updates. 
% If an update is highly similar to those of other updates, the corresponding token is considered redundant.
If an update is highly similar to others, its corresponding token is considered redundant; consequently, pruning this duplicated information results in negligible loss.
% In the subsequent sections, we detail the mathematical formulations and the efficient token pruning algorithm considering two informativeness criteria.
In the subsequent sections, we detail \textbf{the mathematical formulations} and \textbf{the efficient token pruning algorithm} that consider two informativeness criteria.
An overview of the proposed token pruning framework is illustrated in \Cref{fig:overview}.

\subsection{Information magnitude: Dual-Form Importance Metric}
\label{sec:importance}
To address the first condition, we evaluate the individual information magnitude of each image token. Recall that the dual-form representation of attention consists of two components: (i) the accumulated dual weight $\mathbf{W}_N$, and (ii) the query-dependent normalization term $\eta_N(\mathbf{q})$.

% \subsubsection{Accumulated Kernel Weight Matrix} 
\noindent\textbf{Dual Weight Contribution.} 
According to the dual formulation, each token contributes a rank-1 update to the accumulated dual weight: $\Delta \mathbf{W}_i = \phi(\mathbf{k}_i)^\top\mathbf{v}_i$. We quantify the magnitude of the update using its Frobenius norm, $\|\Delta \mathbf{W}_i\|_F$. Using the identity $\|\mathbf{a}\mathbf{b}^\top\|_F = \|\mathbf{a}\|_2\|\mathbf{b}\|_2$, we obtain
\begin{equation}
\label{eq:rank-1 magnitude}
\|\Delta \mathbf{W}_i\|_F = \|\phi(\mathbf{k}_i)\|_2 \|\mathbf{v}_i\|_2,
\end{equation}
which characterizes the magnitude of the rank-1 update from token $i$. 
Consequently, tokens with a smaller $\|\Delta \mathbf{W}_i\|_F$ can be removed while minimizing the information loss.

\noindent\textbf{Normalization Contribution.}
Unlike unnormalized linear attention, standard softmax attention incorporates a query-dependent normalization term, $\eta_N(\mathbf{q}) = 1 / \sum_{j=1}^{N} \kappa(\mathbf{q}, \mathbf{k}_j)$.
Thus, an importance metric for token removal should account for both the token's contribution to the accumulated dual weight and its contribution to the normalization denominator. 
In particular, even if a token has a small dual-weight update magnitude $\|\Delta \mathbf{W}_i\|_F$, it may still have a non-negligible effect on the normalized output when it contributes substantially to the denominator.
The contribution of token $i$ to this denominator is given by the kernel similarity $\kappa(\mathbf{q}, \mathbf{k}_i)$:
\begin{equation}
\label{eq:query key sim}
\kappa(\mathbf{q}, \mathbf{k}_i) = \|\phi(\mathbf{q})\|_2\|\phi(\mathbf{k}_i)\|_2 \cos \theta_i
\end{equation}
where $\theta_i$ is the angle between $\phi(\mathbf{q})$ and $\phi(\mathbf{k}_i)$ in the Hilbert space $\mathcal{H}$. 

\noindent\textbf{Dual-Form Importance Metric.}
% To jointly account for these two aspects, the rank-1 update magnitude and the kernel-based alignment with the query, we introduce the following isolated informativeness metric:
To jointly account for the token's contribution to the accumulated dual weight and its contribution to the normalization term, we define the following isolated informativeness score:
\begin{equation}
\text{Score}_i = \kappa(\mathbf{q}, \mathbf{k}_i)\|\mathbf{v}_i\|_2
\end{equation}
This score corresponds to the query-conditioned contribution of token $i$ through the accumulated dual weight
$\left\|
\phi(\mathbf{q}) \Delta \mathbf{W}_i
\right\|_2,
$
where the same factor $\kappa(\mathbf{q}, \mathbf{k}_i)$ also determines the contribution of token $i$ to the normalization denominator.
Therefore, $\text{Score}_i$ captures both the token's dual-weight contribution and its query-dependent normalization contribution.
% Equivalently, using \cref{eq:rank-1 magnitude,eq:query key sim},
% $
% \text{Score}_i
% =
% \|\phi(\mathbf{q})\|_2
% \cos\theta_i
% \|\Delta \mathbf{W}_i\|_F .
% $
The specific query $\mathbf{q}$ utilized to compute this score remains a flexible design choice, adaptable to either text or image. To minimize the overhead, we aggregate the queries into a single representative vector, which reduces the cost to $\mathcal{O}(n)$ complexity. 
In practice, we use the mean of the text-token queries, defined as: $\mathbf{q}_T = \frac{1}{N_{\text{text}}} \sum_{j=1}^{N_{\text{text}}} \mathbf{q}_j.$
Alternative choices, including the mean image-token query $\mathbf{q}_I$ and the last text-token query $\mathbf{q}_{T_{[-1]}}$, are empirically studied in \Cref{sec:analysis}.

% To maintain computational efficiency, the specific query $\mathbf{q}$ utilized to compute this score is aggregated into a single representative vector rather than computing unique scores for every query in a sequence. For instance, in text-based tasks, we utilize the mean of the text-token queries: 
% \begin{equation}
% \mathbf{q}_T = \frac{1}{N_{\text{text}}} \sum_{j=1}^{N_{\text{text}}} \mathbf{q}_j
% \end{equation}
% This representative $\mathbf{q}_T$ ensures that preserved tokens are those most globally relevant to the current context's manifold, maintaining the structural integrity of the effective dual weight.

% \input{tables/4.1_pruning}

\begin{table*}[!t]
\vspace{-2mm}
\centering
\setlength{\tabcolsep}{3pt}
\resizebox{\textwidth}{!}{ 
\begin{tabular}{l cccccccccc |c}
\toprule
\textbf{Metric} & \textbf{AI2D} & \textbf{DocVQA} & \textbf{InfoVQA} & \textbf{MMBench} & \textbf{MME} & \textbf{MMMU} & \textbf{SciQA} & \textbf{TextVQA} & \textbf{MMStar} & \textbf{POPE} & \textbf{Avg. (\%)} \\
\midrule
\textbf{Baseline} & 81.3 & 87.1 & 66.0 & 80.7 & 1992.0 & 49.2 & 95.9 & 75.8 & 61.9 & 88.3 & 100.0 \\
\midrule\midrule

% ============================================================
\multicolumn{12}{l}{\textit{(a) Information magnitude metric analysis}} \\
\midrule
$\|\mathbf{v}_i\|_2$ & 79.6 & 79.1 & 51.2 & 79.5 & 1933.0 & 49.8 & 94.2 & 67.6 & 58.7 & 88.7 & 94.6 \\
$\|\mathbf{k}_i\|_2$ & 74.0 & 48.9 & 43.0 & 72.2 & 1558.8 & 46.6 & 83.5 & 61.7 & 49.5 & 78.1 & 81.1 \\
$\|\Delta \mathbf{W}_i\|_F$ & 79.0 & 69.8 & 43.8 & 79.4 & 1983.6 & 47.8 & 92.5 & 68.4 & 56.7 & 85.7 & 91.4 \\
\midrule
$\kappa(\mathbf{q_I}, \mathbf{k}_i)$ & 80.0 & 84.5 & 60.9 & 79.2 & 1998.5 & 49.3 & 92.0 & 75.3 & 57.5 & 87.0 & 97.3 \\
$\kappa(\mathbf{q_T}, \mathbf{k}_i)$ & 79.8 & 84.6 & 60.9 & 79.4 & 1973.4 & 49.1 & 91.8 & 75.5 & 58.2 & 87.1 & 97.3 \\
$\kappa(\mathbf{q_{T_{[-1]}}}, \mathbf{k}_i)$ & 78.8 & 85.0 & 59.8 & 79.0 & 1985.3 & 49.2 & 91.6 & 75.1 & 56.1 & 86.6 & 96.6 \\
\midrule
\rowcolor{gray!10}
$\kappa(\mathbf{q_I}, \mathbf{k}_i)\|\mathbf{v}_i\|_2$ & 80.2 & 85.5 & 60.9 & 79.5 & 2003.5 & 49.1 & 92.6 & 75.5 & 58.1 & 87.4 & 97.7 \\
\rowcolor{gray!10}
$\kappa(\mathbf{q_T}, \mathbf{k}_i)\|\mathbf{v}_i\|_2$ & 79.8 & 85.4 & 61.4 & 79.4 & 2001.5 & 49.0 & 92.0 & 75.6 & 59.1 & 87.6 & \textbf{97.8} \\

% ============================================================
\specialrule{1.2pt}{2pt}{2pt}
\multicolumn{12}{l}{\textit{(b) Information duplication metric analysis}} \\
\midrule
$\cos (\mathbf{v}_i,\mathbf{v}_j)$ & 80.3 & 77.8 & 50.1 & 78.6 & 1901.9 & 48.0 & 92.8 & 71.8 & 58.1 & 88.1 & 93.9 \\
$\cos (\mathbf{h}_i, \mathbf{h}_j)$ & 79.9 & 80.9 & 51.3 & 79.2 & 1900.4 & 48.6 & 92.6 & 72.9 & 58.5 & 88.6 & 94.9 \\
$\cos (\mathbf{k}_i,\mathbf{k}_j)$ & 80.0 & 81.5 & 55.1 & 80.1 & 1913.8 & 48.6 & 93.8 & 73.7 & 59.1 & 88.0 & 95.9 \\
$\cos(\phi(\mathbf{k}_i),\phi(\mathbf{k}_j))$ & 80.3 & 84.4 & 60.2 & 79.6 & 1965.9 & 49.0 & 92.7 & 75.0 & 59.9 & 88.5 & 97.6 \\
\rowcolor{gray!10}
$\cos (\Delta \mathbf{W}_i,\Delta \mathbf{W}_j)$ & 80.5 & 85.3 & 60.6 & 79.6 & 1985.0 & 48.9 & 92.9 & 75.2 & 60.0 & 88.1 & \textbf{97.9} \\
\bottomrule
\end{tabular}
}
\caption{\textbf{Analyses of information importance and duplication metrics on LLaVA-OneVision-7B.} 
\textbf{(a)} Effect of \emph{information magnitude} metrics. 
\textbf{(b)} Effect of \emph{information duplication} metrics for Progressive Chunked MMR. 
In all experiments, tokens are pruned after the fourth layer with a 35.3\% token budget. Scores are absolute performance, while Avg. (\%) indicates the relative average normalized by the full-token baseline.}
\label{tab:metric_analysis}
\end{table*}

% \begin{wraptable}{r}{0.6\textwidth}
% \vspace{-4.5mm}
% \centering
% \caption{\textbf{Comparison of dual-form importance metrics.}
% % The proposed dual-form importance metric can be factorized into three distinct components:
% % angular alignment $\cos\theta_i$, key magnitude $\|\phi(\mathbf{k}_i)\|_2$, and value magnitude $\|\mathbf{v}_i\|_2$.
% % Each metric is analyzed based on whether it accounts for these components ($\checkmark$) or not ($\times$).}
% }
% \label{tab:importance_metric_comparison}
% % \small
% \scriptsize
% \begin{tabular}{lcccc}
% \toprule
% \textbf{Metric} & \textbf{Formulation} 
% & $\cos\theta_i$ 
% & $\|\phi(\mathbf{k}_i)\|_2$ 
% & $\|\mathbf{v}_i\|_2$ \\
% \midrule

% Kernel Score 
% & $\kappa(\mathbf{q}, \mathbf{k}_i)$ 
% & \checkmark 
% & \checkmark 
% & $\times$ \\

% Gradient Norm 
% & $\|\Delta{\mathbf{W}_i}\|_F$ 
% & $\times$ 
% & \checkmark 
% & \checkmark \\

% Key Feature Norm 
% & $\|\phi(\mathbf{k}_i)\|_2$ 
% & $\times$ 
% & \checkmark 
% & $\times$ \\

% Value Norm 
% & $\|\mathbf{v}_i\|_2$ 
% & $\times$ 
% & $\times$ 
% & \checkmark \\

% \midrule

% \rowcolor{gray!10}

% \textbf{Ours} 
% & $\kappa(\mathbf{q}, \mathbf{k}_i)\|\mathbf{v}_i\|_2$ 
% & \checkmark 
% & \checkmark 
% & \checkmark \\
% \bottomrule
% \end{tabular}
% \vspace{-5mm}
% \end{wraptable}

\begin{table}[!t]
% \vspace{-4.5mm}
\centering
\small
\scriptsize
\setlength{\tabcolsep}{3pt}
\resizebox{\columnwidth}{!}{ 
\begin{tabular}{lcccc}
\toprule
\textbf{Metric} & \textbf{Formulation} 
& $\cos\theta_i$ 
& $\|\phi(\mathbf{k}_i)\|_2$ 
& $\|\mathbf{v}_i\|_2$ \\
\midrule

Kernel Score 
& $\kappa(\mathbf{q}, \mathbf{k}_i)$ 
& \checkmark 
& \checkmark 
& $\times$ \\

Gradient Norm 
& $\|\Delta{\mathbf{W}_i}\|_F$ 
& $\times$ 
& \checkmark 
& \checkmark \\

Key Feature Norm 
& $\|\phi(\mathbf{k}_i)\|_2$ 
& $\times$ 
& \checkmark 
& $\times$ \\

Value Norm 
& $\|\mathbf{v}_i\|_2$ 
& $\times$ 
& $\times$ 
& \checkmark \\

\midrule

\rowcolor{gray!10}

\textbf{Ours} 
& $\kappa(\mathbf{q}, \mathbf{k}_i)\|\mathbf{v}_i\|_2$ 
& \checkmark 
& \checkmark 
& \checkmark \\
\bottomrule
\end{tabular}
}
\caption{\textbf{Comparison of dual-form impor. metrics.}
% The proposed dual-form importance metric can be factorized into three distinct components:
% angular alignment $\cos\theta_i$, key magnitude $\|\phi(\mathbf{k}_i)\|_2$, and value magnitude $\|\mathbf{v}_i\|_2$.
% Each metric is analyzed based on whether it accounts for these components ($\checkmark$) or not ($\times$).}
}
\label{tab:importance_metric_comparison}
\vspace{-5mm}
\end{table}

\subsection{Information Duplication: Dual-Form Similarity Metric}
\label{sec:similarity}
For the second criterion, we quantify redundancy between tokens through the similarity of their induced rank-1 updates.
A high similarity indicates that the updates from both tokens contain duplicated information, whereas a near-zero similarity implies that the updated information is orthogonal and introduces novel information to the effective dual weight. 
To compute this similarity, we derive the inner product between the corresponding update matrices.
Utilizing the reproducing property $\langle \phi(\mathbf{k}_i), \phi(\mathbf{k}_j) \rangle_{\mathcal{H}} = \kappa(\mathbf{k}_i, \mathbf{k}_j)$, the Frobenius inner product between two rank-1 updates is given by:
\begin{align}
\langle \Delta \mathbf{W}_i, \Delta \mathbf{W}_j \rangle_F
=
(\mathbf{v}_i \cdot \mathbf{v}_j)\,
\kappa(\mathbf{k}_i, \mathbf{k}_j).
\end{align}
Normalizing by 
$\|\Delta \mathbf{W}_i\|_F
=
\sqrt{\kappa(\mathbf{k}_i, \mathbf{k}_i)}\,\|\mathbf{v}_i\|_2$,
we define the dual weight similarity as
% \begin{align}
% \text{S}_{ij}
% =
% \frac{\langle \Delta \mathbf{W}_i, \Delta \mathbf{W}_j \rangle_F}
% {\|\Delta \mathbf{W}_i\|_F \|\Delta \mathbf{W}_j\|_F}
% =
% \left(
% \frac{\mathbf{v}_i \cdot \mathbf{v}_j}
% {\|\mathbf{v}_i\|_2 \|\mathbf{v}_j\|_2}
% \right)
% \left(
% \frac{\kappa(\mathbf{k}_i, \mathbf{k}_j)}
% {\sqrt{\kappa(\mathbf{k}_i, \mathbf{k}_i)\kappa(\mathbf{k}_j, \mathbf{k}_j)}}
% \right).
% \end{align}
\begin{equation}
\begin{split}
\text{S}_{ij}
&= \frac{\langle \Delta \mathbf{W}_i, \Delta \mathbf{W}_j \rangle_F}
        {\|\Delta \mathbf{W}_i\|_F \|\Delta \mathbf{W}_j\|_F} \\
&= \left(\frac{\mathbf{v}_i \cdot \mathbf{v}_j}{\|\mathbf{v}_i\|_2 \|\mathbf{v}_j\|_2}\right)
   \left(\frac{\kappa(\mathbf{k}_i, \mathbf{k}_j)}{\sqrt{\kappa(\mathbf{k}_i, \mathbf{k}_i)\kappa(\mathbf{k}_j, \mathbf{k}_j)}}\right).
\end{split}
\end{equation}
Thus, $S_{ij}$ is factorized into the cosine similarity between value vectors and the cosine similarity between keys in the RKHS. We use the squared similarity $S_{ij}^2$ as the metric to prioritize updates whose similarity is close to zero.
For the exponential kernel $\kappa$, the latter term reduces to a Gaussian RBF kernel,
\(
\exp\!\left(
-\frac{\|\mathbf{k}_i - \mathbf{k}_j\|_2^2}
{2\sqrt{d_h}}
\right)
\).
\subsection{Dual Form-based Token Pruning}
\label{sec:dual-form based token pruning}
Finally, we combine dual-form based metrics via Maximal Marginal Relevance (MMR)~\cite{carbonell1998mmr}, which penalizes candidates similar to already selected tokens.
% Finally, we combine information magnitude and duplication into a unified token pruning framework. Magnitude-only selection often produces redundant updates, causing rank collapse in the accumulated dual weight, while duplication-only selection may retain tokens with negligible magnitudes. 
% Thus, we aim to select tokens with large information magnitude and low duplication. To this end, we adopt Maximal Marginal Relevance (MMR) \cite{carbonell1998mmr}, which penalizes candidates similar to already selected tokens.
Given the full token set $\mathcal{U}$ and the selected set $\mathcal{C}$, MMR chooses the next token as
$
i^* = \arg\max_{i \in \mathcal{U} \setminus \mathcal{C}} [ \lambda \cdot P_i - (1 - \lambda) \max_{j \in \mathcal{C}} S_{i,j} ],
$
where $P_i$, $S_{i,j}$, and $\lambda$ denote the dual-form importance metric ($\text{Score}_i$), dual-form similarity metric, and similarity penalty ratio, respectively.

However, standard MMR is greedy-selection based, requiring $(1-\rho)N_{\text{img}}$ iterations and $\mathcal{O}(N_{\text{img}}^2)$ pairwise comparisons, which creates a bottleneck during inference. To overcome this, we propose Progressive Chunked MMR, which accounts for both criteria with lower overhead.

\noindent\textbf{Progressive Chunked MMR} uses chunk-based selection and comparison. The goal is minimizing the information-duplicated tokens, while minimizing the computational and time cost of the MMR. It first computes each token's information magnitude in $O(n)$ time, then selects tokens in chunks rather than one by one. At each iteration, it selects a chunk of top-magnitude tokens ($L_{\text{new}}$) and computes the duplication matrix $S$ between the selected set $\mathcal{C}$ and the remaining tokens $\mathcal{U}$. The maximum redundancy $\mathbf{s}_{\max}$ is then used as a multiplicative penalty for the remaining candidates, favoring tokens with high magnitude and low duplication. The similarities are kept during the iteration, reducing the computational cost. To further balance the trade-off, the chunk size is progressively increased ($b \leftarrow b \cdot g$), accelerating selection with limited performance degradation. The detailed procedure is provided in \Cref{alg:parallel_mmr} in the Appendix.
% \clearpage
\section{Analysis}
\label{sec:analysis}

\begin{table*}[!t]
\vspace{-2mm}
\centering
\resizebox{\linewidth}{!}{
\begin{tabular}{l cccccccccc|c}
\toprule
\textbf{Method} & \textbf{AI2D} & \textbf{DocVQA} & \textbf{InfoVQA} & \textbf{MMB.} & \textbf{MME} & \textbf{MMMU} & \textbf{SciQA} & \textbf{TextVQA} & \textbf{MMStar} & \textbf{POPE} & \textbf{Avg. (\%)} \\
\midrule
\rowcolor{gray!10}
\multicolumn{12}{c}{\textit{Upper Bound, All Tokens} ($\mathbf{100\%}$)} \\
Baseline & 81.3 & 87.1 & 66.0 & 80.7 & 1992.0 & 49.2 & 95.9 & 75.8 & 61.9 & 88.3 & 100.0 \\
\rowcolor{gray!10}
\multicolumn{12}{c}{\textit{Token Budget: 35.3\%} \textcolor{Green}{($\downarrow\mathbf{64.7\%}$)}} \\
FastV & 79.6 & 84.5 & 58.0 & 79.1 & 1984.5 & 48.4 & 91.8 & 75.6 & 58.7 & 86.5 & 96.7 \\
PACT & 79.9 & 84.2 & 60.9 & 79.3 & 1947.8 & 48.3 & 92.7 & 75.4 & 59.1 & 88.3 & 97.3 \\
VisionZip & 76.2 & 47.6 & 36.0 & 77.7 & 1916.2 & 46.4 & 90.1 & 63.0 & 53.8 & 87.8 & 85.3 \\
DivPrune & 79.2 & 71.0 & 46.7 & 78.4 & 1850.3 & 49.6 & 91.8 & 67.7 & 57.3 & 88.3 & 91.8 \\
CDPruner & 78.0 & 60.3 & 50.6 & 79.9 & 1914.3 & 46.9 & 92.8 & 71.2 & 57.9 & 88.5 & 91.7 \\
SCOPE & 78.1 & 48.0 & 44.2 & 78.8 & 1923.9 & 46.6 & 89.8 & 69.1 & 55.1 & 88.3 & 88.1 \\
\rowcolor{gray!40}
\textbf{Ours} & 80.5 & 85.3 & 60.6 & 79.6 & 1985.0 & 48.9 & 92.9 & 75.2 & 60.0 & 88.1 &  \textbf{97.9} \\
\rowcolor{gray!15}
\multicolumn{12}{c}{\textit{Token Budget: 22.2\%} \textcolor{Green}{($\downarrow\mathbf{77.8\%}$)}} \\
FastV & 78.6 & 80.6 & 52.1 & 78.5 & 1994.5 & 47.7 & 91.1 & 74.6 & 57.0 & 84.7 & 94.4 \\
PACT & 79.7 & 80.5 & 54.9 & 77.8 & 1905.4 & 47.8 & 90.9 & 73.8 & 57.3 & 87.3 & 94.7 \\
VisionZip & 73.0 & 37.2 & 32.6 & 78.8 & 1840.6 & 46.9 & 88.5 & 54.5 & 53.3 & 85.9 & 81.4 \\
DivPrune & 77.0 & 60.4 & 39.2 & 77.1 & 1758.0 & 48.7 & 89.3 & 61.2 & 55.5 & 87.3 & 86.9 \\
CDPruner & 77.8 & 56.9 & 52.4 & 79.4 & 1889.0 & 47.0 & 91.5 & 68.9 & 57.9 & 88.3 & 91.0 \\
SCOPE & 75.9 & 33.3 & 37.3 & 77.7 & 1872.1 & 46.4 & 87.3 & 64.0 & 51.3 & 87.7 & 83.0 \\
\rowcolor{gray!40}
\textbf{Ours} & 79.3 & 82.2 & 56.5 & 78.8 & 1977.9 & 47.8 & 91.5 & 74.4 & 57.3 & 87.3 &  \textbf{95.6} \\
\rowcolor{gray!10}
\multicolumn{12}{c}{\textit{Token Budget: 11.1\%} \textcolor{Green}{($\downarrow\mathbf{88.9\%}$)}} \\
FastV & 77.0 & 67.4 & 41.4 & 77.0 & 1955.8 & 47.9 & 89.3 & 71.3 & 53.2 & 79.2 & 88.8 \\
PACT & 75.8 & 74.6 & 49.3 & 76.5 & 1883.7 & 45.1 & 89.4 & 70.2 & 55.4 & 83.3 & 90.4 \\
VisionZip & 70.8 & 26.9 & 29.3 & 74.7 & 1788.2 & 46.0 & 84.8 & 43.5 & 47.3 & 83.1 & 75.4 \\
DivPrune & 74.4 & 45.2 & 33.3 & 74.5 & 1638.0 & 47.3 & 85.0 & 52.1 & 48.5 & 86.1 & 79.8 \\
CDPruner & 77.8 & 56.2 & 54.7 & 78.2 & 1849.5 & 47.4 & 89.1 & 66.7 & 56.7 & 88.4 & 90.2 \\
SCOPE & 71.7 & 21.6 & 30.7 & 74.0 & 1743.6 & 45.8 & 86.5 & 55.7 & 49.1 & 85.7 & 77.2 \\
\rowcolor{gray!40}
\textbf{Ours} & 76.8 & 71.9 & 45.5 & 76.5 & 1947.0 & 47.4 & 89.5 & 70.7 & 54.6 & 84.0 &  \textbf{90.5} \\
\bottomrule
\end{tabular}
}
\caption{\textbf{Performance comparison of different pruning methods on LLaVA-OneVision-7B.}
Scores are absolute performance, while Avg. (\%) indicates the relative average normalized by the full-token baseline.}
\label{tab:llava-onevision-pruning}
\vspace{-5mm}
\end{table*}
% While our method uses both information magnitude and duplication, we first analyze these two criteria in detail. For analysis, we conduct our experiments under fixed token budgets using LLaVA-OneVision-7B. Further analyses, including formal justification, comparison between information magnitude and duplication, experiments and experimental details can be found in \Cref{supp:further_analyses,supp:imp_details} and \Cref{sec:experiments}.

% The benefit of the information duplication metric grows as the pruning budget tightens (\cref{tab:ablation_budget_analysis}). At larger token budgets, the importance metric alone identifies informative tokens reliably. At smaller budgets, many tokens share comparable importance scores, and accounting for duplication yields additional gains.
% \cref{supp:imp_details}
While our method leverages both information magnitude and duplication, we first analyze these two criteria in detail. For analysis, we conduct experiments under fixed token budgets using LLaVA-OneVision-7B. 
The experimental setup is described in \Cref{sec:experiments}, and ablation studies on the two criteria and the effect of RoPE are provided in \Cref{supp:further_analyses}.

\subsection{Information Magnitude}
% In \Cref{sec:importance}, we introduced two key factors to consider when selecting tokens for pruning: 
In \Cref{sec:importance}, we introduced two key factors when selecting tokens: 
(i) the dual weight $\|\Delta \mathbf{W}_i\|_F$ and (ii) the normalization term $\eta(\mathbf{q})$. 
The combined information magnitude  metric decomposes into angular alignment, key magnitude, and value magnitude—dimensions along which existing metrics are categorized in \Cref{tab:importance_metric_comparison}.

Most existing methods adopt kernel-based scoring strategies. In particular, methods based on $\kappa(\mathbf{q}, \mathbf{k}_i)$ \cite{chen2024fastv, dhouib2025pact} primarily capture angular alignment and key magnitude while neglecting value-side magnitude. Magnitude-based metrics such as $\|\Delta \mathbf{W}_i\|_F$ emphasize key and value magnitudes yet overlook the angular alignment component. 
In contrast, the proposed metric $\kappa(\mathbf{q}, \mathbf{k}_i)\|\mathbf{v}_i\|_2$ jointly accounts for normalization and dual-weight perturbation, aligning with the dual-form structure.

As shown in \cref{tab:metric_analysis}, our proposed metric $\kappa(\mathbf{q}_T, \mathbf{k}_i)\|\mathbf{v}_i\|_2$, which jointly considers alignment, key magnitude, and value magnitude, achieves the best overall performance. When evaluated individually, alignment-based methods of the form $\kappa(\mathbf{q}, \mathbf{k}_i)$ attain comparable performance at 97.3\% despite omitting value magnitude. 
Conversely, relying solely on magnitude leads to lower performance: the normalization term distorts the contribution of high-magnitude tokens, rendering the resulting scores an unreliable indicator of importance. 
% We further expect this effect to intensify as the number of vision tokens grows, since the normalization term aggregates the alignments between the query and all keys. Consistent with this, our experiments show that magnitude-only metrics suffer a substantial performance drop on high-resolution datasets such as DocVQA and InfoVQA. Among magnitude-only metrics, using value magnitude consistently outperforms using key magnitude. 
% This degradation intensifies on high-resolution datasets (e.g., DocVQA, InfoVQA), where the number of vision tokens grows. Among magnitude-only metrics, value magnitude outperforms key magnitude.
This degradation intensifies on high-resolution datasets (e.g., DocVQA, InfoVQA), where more vision tokens amplify the normalization effect. Among magnitude-only metrics, value magnitude outperforms key magnitude.

Finally, relying on the last text token $\mathbf{q}_{T_{[-1]}}$ degrades performance relative to using the aggregated queries $\mathbf{q}_I$ or $\mathbf{q}_T$. By integrating these three critical components with aggregated queries, our proposed metric achieves optimal performance.

% \subsection{Information Duplication}
% As detailed in \cref{sec:similarity}, we formulate information duplication between two tokens as the similarity between their rank-1 dual weight updates, which decomposes into key similarity within the RKHS and value similarity.
% While this aligns with diversity-based token pruning methods \cite{dhouib2025pact, alvar2025divprune, deng2025scope, zhang2025cdpruner}, it introduces a crucial distinction: we define redundancy through the overlap of induced dual updates rather than proximity in the raw feature space.
% To investigate which representation space best captures information duplication, we evaluate similarities across five variants: value, hidden-state, key, kernelized-key, and dual-weight spaces.

% \input{tables/4.2_mmr}

% \Cref{tab:mmr analysis} reveals a clear trend: key-space similarity consistently outperforms value-space similarity, with hidden-state similarity in between. Applying cosine similarity in the RKHS further improves performance by 1.7 percentage points (pp), highlighting the importance of kernel-induced similarity. Consequently, dual weight similarity achieves the best performance at 97.9\%, demonstrating that the similarity between rank-1 dual weight updates effectively captures information redundancy.

\begin{table*}[!t]
\centering
\resizebox{\linewidth}{!}{
\begin{tabular}{l cccccccccc |c}
\toprule
\textbf{Method} & \textbf{AI2D} & \textbf{DocVQA} & \textbf{InfoVQA} & \textbf{MMB.} & \textbf{MME} & \textbf{MMMU} & \textbf{SciQA} & \textbf{TextVQA} & \textbf{MMStar} & \textbf{POPE} & \textbf{Avg.(\%)} \\
\midrule
\rowcolor{gray!10}
\multicolumn{12}{c}{\textit{Upper Bound, All Tokens} ($\mathbf{100\%}$)} \\
Baseline
& 82.4 & 94.6 & 80.4 & 84.2 & 2310.2 & 50.7 & 88.3 & 82.8 & 62.3 & 87.6 & 100.0 \\
\rowcolor{gray!10}
\multicolumn{12}{c}{\textit{Token Budget: 35.3\%} \textcolor{Green}{($\downarrow\mathbf{64.7\%}$)}} \\
FastV
& 79.5 & 80.6 & 64.3 & 81.0 & 2006.6 & 49.8 & 84.0 & 80.5 & 56.4 & 82.1 & 92.0 \\
PACT
& 79.6 & 69.7 & 65.8 & 78.3 & 2244.4 & 47.6 & 85.1 & 79.5 & 55.9 & 81.2 & 91.1 \\
VisionZip
& 81.1 & 89.4 & 63.7 & 82.6 & 2307.6 & 51.0 & 87.8 & 79.5 & 59.3 & 85.2 & 96.0 \\
DivPrune
& 80.9 & 86.0 & 60.7 & 81.5 & 2274.0 & 48.7 & 86.6 & 78.9 & 57.8 & 84.9 & 94.0 \\
SparseVLM
& 78.4 & 68.1 & 55.1 & 81.2 & 2102.0 & 49.1 & 83.7 & 78.5 & 56.3 & 86.2 & 89.8 \\
PyramidDrop
& 74.9 & 37.0 & 34.1 & 71.9 & 2193.0 & 44.8 & 84.5 & 75.4 & 55.2 & 83.3 & 81.2 \\
V2Drop
& 78.5 & 54.3 & 47.3 & 80.2 & 2206.0 & 47.3 & 86.1 & 58.4 & 57.6 & 85.5 & 85.4 \\
% ZooPrune
% & 58.7 & 44.1 & 36.7 & 74.1 & 1512.0 & 42.1 & 81.7 & 24.5 & 49.0 & 82.1 & 69.4 \\
SCOPE
& 76.8 & 90.5 & 68.1 & 78.6 & 2339.3 & 49.6 & 87.6 & 81.1 & 59.8 & 85.8 & 95.7 \\

CDPruner
& 80.3 & 76.1 & 53.3 & 81.5 & 2231.5 & 49.6 & 85.9 & 73.4 & 57.6 & 81.7 & 90.9 \\

\rowcolor{gray!40}
\textbf{Ours} 
& 81.8 & 90.5 & 66.8 & 82.3 & 2319.8 & 50.0 & 87.2 & 80.7 & 61.0 & 85.7 &  \textbf{96.8 }\\
\rowcolor{gray!10}
\multicolumn{12}{c}{\textit{Token Budget: 22.2\%} \textcolor{Green}{($\downarrow\mathbf{77.8\%}$)}} \\
FastV 
& 76.9 & 81.4 & 53.4 & 79.0 & 2236.2 & 49.0 & 83.0 & 77.0 & 53.0 & 77.9 & 89.4 \\
PACT
& 78.5 & 71.6 & 57.2 & 77.2 & 2222.5 & 47.4 & 83.9 & 76.8 & 55.0 & 82.8 & 89.4 \\
VisionZip 
& 78.7 & 78.4 & 50.9 & 80.6 & 2229.0 & 49.1 & 85.5 & 74.5 & 55.0 & 83.5 & 90.2 \\
DivPrune
& 78.0 & 76.3 & 48.0 & 79.5 & 2193.4 & 49.1 & 84.9 & 75.1 & 54.0 & 83.8 & 89.2 \\
SparseVLM
& 76.2 & 63.4 & 52.0 & 79.8 & 2053.0 & 47.7 & 82.3 & 76.6 & 53.4 & 85.8 & 87.1 \\
PyramidDrop
& 73.1 & 29.8 & 28.1 & 70.4 & 2112.0 & 44.2 & 84.0 & 70.5 & 52.4 & 81.1 & 77.4 \\
V2Drop
& 78.1 & 39.9 & 40.2 & 79.0 & 2220.0 & 47.0 & 85.9 & 46.2 & 56.1 & 84.4 & 80.9 \\
% ZooPrune
% & 55.8 & 24.0 & 29.6 & 71.1 & 1377.0 & 39.9 & 80.4 & 17.2 & 43.7 & 76.0 & 62.1 \\
SCOPE
& 80.3 & 84.1 & 55.9 & 80.4 & 2256.9 & 48.7 & 86.0 & 78.6 & 56.8 & 85.0 & 92.6 \\
CDPruner
& 78.9 & 66.2 & 44.4 & 79.1 & 2108.6 & 49.7 & 83.5 & 67.6 & 53.2 & 79.4 & 85.8 \\

\rowcolor{gray!40}
\textbf{Ours} 
& 80.5 & 84.1 & 54.9 & 81.6 & 2242.7 & 49.6 & 86.0 & 78.0 & 58.6 & 84.1 &  \textbf{93.0} \\
\rowcolor{gray!10}
\multicolumn{12}{c}{\textit{Token Budget: 11.1\%} \textcolor{Green}{($\downarrow\mathbf{88.9\%}$)}} \\
FastV
& 71.6 & 63.7 & 39.5 & 76.7 & 2006.6 & 48.4 & 82.8 & 69.2 & 47.5 & 68.5 & 80.9 \\
PACT
& 72.9 & 51.9 & 39.3 & 76.2 & 2108.0 & 46.4 & 82.9 & 67.2 & 48.6 & 78.8 & 80.8 \\
VisionZip
& 73.8 & 56.5 & 36.9 & 75.8 & 1995.1 & 46.1 & 83.1 & 63.3 & 49.4 & 78.9 & 80.4 \\
SparseVLM
& 71.1 & 53.8 & 45.4 & 76.5 & 1913.0 & 46.2 & 80.2 & 71.0 & 50.1 & 82.9 & 81.6 \\
PyramidDrop
& 71.8 & 22.0 & 21.5 & 69.5 & 1957.0 & 43.8 & 82.0 & 59.9 & 50.2 & 78.3 & 72.6 \\
V2Drop
& 73.7 & 25.2 & 28.2 & 52.1 & 2148.0 & 42.0 & 83.8 & 35.4 & 51.3 & 81.8 & 70.2 \\
% ZooPrune
% & 57.4 & 13.6 & 25.4 & 62.5 & 1179.0 & 37.9 & 76.3 & 13.8 & 39.4 & 66.2 & 55.8 \\
SCOPE
& 74.6 & 64.0 & 39.7 & 79.0 & 2100.8 & 48.2 & 83.9 & 72.2 & 51.8 & 81.5 & 84.6 \\
DivPrune
& 72.4 & 57.5 & 34.8 & 77.1 & 2033.7 & 46.0 & 81.5 & 64.6 & 48.5 & 79.2 & 80.2 \\
CDPruner
& 74.1 & 49.5 & 33.4 & 75.7 & 1925.0 & 47.3 & 82.7 & 55.9 & 46.8 & 73.5 & 77.2 \\

\rowcolor{gray!40}
\textbf{Ours} 
& 76.4 & 64.0 & 37.8 & 77.7 & 2152.3 & 48.8 & 84.0 & 71.6 & 53.3 & 78.2 &  \textbf{84.7} \\
\bottomrule
\end{tabular}
}
\caption{\textbf{Performance comparison of different pruning methods on Qwen2.5-VL-7B.}
Scores indicate absolute performance, while Avg. (\%) represents the relative average normalized by the full-token baseline.
}
\label{tab:qwen2.5-vl}
\end{table*}

\subsection{Information Duplication}
As detailed in \Cref{sec:similarity}, we formulate information duplication between two tokens as the similarity between their rank-1 dual weight updates—a metric that naturally decomposes into key similarity within the RKHS and value similarity. While this perspective partially aligns with diversity-based token pruning methods \cite{dhouib2025pact, alvar2025divprune, deng2025scope, zhang2025cdpruner} that seek to minimize redundancy among retained tokens, it introduces a crucial distinction: we define redundancy in terms of the overlap of induced dual updates rather than proximity in the raw feature space. Conventionally, such similarity is measured using two widely adopted metrics: cosine similarity between keys and cosine similarity between hidden states. To investigate which representation space most effectively captures information duplication, we evaluate similarities across five variants: value, hidden-state, key, kernelized-key, and dual-weight spaces.

\Cref{tab:metric_analysis} reveals a clear performance trend across representation spaces. Similarity measured in the key-space consistently outperforms value-space, while hidden-state similarity yields intermediate performance. Notably, applying cosine similarity in the RKHS improves performance by 1.7 percentage points (pp), underscoring the importance of kernel-induced similarity. Dual weight similarity achieves the best performance, surpassing both conventional metrics. These results demonstrate that the similarity between rank-1 dual weight updates effectively captures information redundancy.

Notably, the benefit of the duplication metric grows as the pruning budget tightens (\Cref{tab:ablation_budget_analysis}): under tighter budgets, many tokens exhibit comparable importance scores, so accounting for duplication yields additional gains. 

% As shown in Table 3, dual weight similarity achieves the best performance (97.9\%), outperforming key-space (95.9\%), hidden-state (94.9\%), and value-space (93.9\%) similarities. Notably, kernel-induced similarity (RKHS) improves over raw key similarity by 1.7 pp, confirming that the dual-form structure better captures information redundancy.
\section{Experiments}

\label{sec:experiments}
\subsection{Experimental Setup}
\noindent\textbf{Evaluation Benchmarks.}
We evaluate on a diverse set of image and video vision–language benchmarks, covering text-centric understanding, knowledge-intensive reasoning, general perception, and hallucination for images, as well as temporal reasoning for videos. 
We report each benchmark's official metric and the relative performance $\text{Score}_{\text{Pruned}}/\text{Score}_{\text{Baseline}}\times 100$ with respect to the vanilla model. Full benchmark list and descriptions are provided in \Cref{supp:benchmarks}.

\noindent\textbf{Implementation Details.}
We use LLaVA-OneVision-7B~\cite{li2025llavaonevision} and Qwen2.5-VL-7B~\cite{yang2025qwen25} with FlashAttention2~\cite{dao2024flashattention}, and evaluate with lmms-eval~\cite{zhang2025lmms}. All baselines are reproduced under our setup for fair comparison. For our method, we set the pruning layer $l=4$, initial chunk size $b_0=2$, growth factor $g=2$, and duplication penalty $\lambda=5$. Full implementation details and hyperparameter ablations are deferred to \Cref{supp:imp_details,supp:further_analyses}.
\subsection{Image Understanding}

\begin{table*}[!t]
  \centering
  
  \begin{minipage}[t]{0.48\textwidth}
    \centering
    \resizebox{\linewidth}{!}{
      \begin{tabular}{l ccccc}
      \toprule
      \textbf{Method} 
      & \textbf{Ego.} 
      & \textbf{V.MME} 
      & \textbf{MLVU} 
      & \textbf{$\text{NExT}$} 
      & \textbf{Avg.(\%)} \\
      
      \rowcolor{gray!10}
      \multicolumn{6}{c}{\textit{Upper Bound, All Tokens} ($\mathbf{100\%}$)} \\
      
      Baseline & 62.4 & 58.4 & 64.7 & 79.3 & 100.0\% \\
      
      \rowcolor{gray!10}
      \multicolumn{6}{c}{\textit{Token Budget: 11.1\%} \textcolor{Green}{($\downarrow\mathbf{88.9\%}$)}} \\
      FastV & 58.2 & 53.4 & 57.3 & 74.5 & 92.0\% \\
      PACT & 61.0 & 54.5 & 61.1 & 76.7 & 95.7\% \\
      DivPrune & 58.8 & 53.8 & 61.0 & 76.8 & 93.9\% \\
      
      \rowcolor{gray!40}
      \textbf{Ours} & 62.2 & 54.8 & 60.9 & 77.6 & 96.9\% \\
      \bottomrule
      \end{tabular}
    }
    \caption{\textbf{Performance comparison on various Video Benchmarks on LLaVA-OneVision-7B.}}
    \label{tab:video}
  \end{minipage}%
  \hfill
  \begin{minipage}[t]{0.48\textwidth}
    \centering
    
    \resizebox{\linewidth}{!}{
      \setlength{\extrarowheight}{0pt}
      \renewcommand{\arraystretch}{0.87}
      \begin{tabular}{l cccc}
      \toprule
      \textbf{Method} & \textbf{Latency (s)} & \textbf{Prefill (s)}  & \textbf{VRAM(GB)} & \textbf{Acc.(\%)} \\
      
      \rowcolor{gray!10}
      \multicolumn{5}{c}{\textit{Upper Bound, All Tokens} ($\mathbf{100\%}$)} \\
      Baseline & 1.37 & 1.13 & 17.8 & 95.9 \\
      \rowcolor{gray!10}
      \multicolumn{5}{c}{\textit{Token Budget: 11.1\%} \textcolor{Green}{($\downarrow\mathbf{88.9\%}$)}} \\
      FastV & 0.88 & 0.61& 16.8 & 89.3 \\
      PACT & 0.78 & 0.43 & 16.1 & 89.4 \\
      DivPrune & 1.09 & 0.89 & 16.3& 85 \\
      CDPruner & 1.08 & 0.88 & 16.3& 89.1 \\
      SCOPE & 2.29 & 2.09 & 16.3& 86.5 \\
      \rowcolor{gray!40}
      \textbf{Ours} & 0.64 & 0.39 & 16.1 & 89.5 \\
      \bottomrule
      \end{tabular}
    }
    \caption{\textbf{Efficiency comparison of different pruning methods on LLaVA-OneVision-7B.}}
    \label{tab:efficiency}
  \end{minipage}
\vspace{-6mm}
\end{table*}
\noindent\textbf{LLaVA-OneVision.}
\Cref{tab:llava-onevision-pruning} shows the performance across various visual benchmarks using LLaVA-OneVision-7B as the backbone. 
The experimental results demonstrate that our method achieves state-of-the-art (SOTA) performance across all three token budgets.
% Among the competing methods, PACT and FastV show the second- and third- best performance, while other approaches exhibit much lower scores.
Our method consistently outperforms other methods, which rely on attention score based importance measurement and diversity criteria, across all evaluated token budgets. In particular, at a token budget of 22.2\%, it surpasses PACT and FastV by 0.9 percentage points (pp) and 1.2 pp, respectively.
These results suggest that the importance and similarity metrics derived from our dual-form perspective provide a more precise estimation of token informativeness compared to previous empirical based methods. Since some previous methods are incompatible with AnyRes in LLaVA-OneVision, we conduct the comparison using a different model.

\noindent\textbf{Qwen2.5-VL.}
We further evaluate on Qwen2.5-VL-7B, which differs from LLaVA-OneVision-7B in its vision encoder, LLM backbone, and visual token pipeline (dynamic resolution vs.\ AnyRes).
As shown in \Cref{tab:qwen2.5-vl}, our method achieves the best performance across all token budgets, outperforming VisionZip by 2.6 pp on average.
% The baseline ranking also shifts: encoder-level VisionZip becomes more competitive, while LLM-level methods (FastV, PACT) drop. Several baselines degrade noticeably on DocVQA.
The baseline ranking also shifts: VisionZip becomes more competitive, while the performance of LLM-level methods (FastV, PACT) drops. Several baselines degrade noticeably on DocVQA.
% These results indicate that prior pruning methods are sensitive to architecture and task, while our dual-form pruning generalizes more robustly: by measuring actual information updates in self-attention rather than architecture-specific patterns, it adapts to different visual token structures such as dynamic resolution.
These results indicate that prior pruning methods are sensitive to architecture and task, while our dual-form pruning generalizes more robustly. By measuring actual information updates in self-attention rather than architecture-specific patterns, it adapts to different visual token structures such as dynamic resolution.

\subsection{Video Understanding}
% To evaluate our method on video benchmarks, we conduct experiments using LLaVA-OneVision across four distinct datasets. We compare our method with FastV \cite{chen2024fastv}, which leverages LLM attention scores for token selection, PACT \cite{dhouib2025pact}, which leverages LLM attention scores and hidden states similarity for clustering, and DivPrune \cite{alvar2025divprune}, which promotes diversity among visual tokens.

In \Cref{tab:video}, we conduct experiments on various video benchmarks. Our proposed approach demonstrates a better trade-off between performance and computational efficiency across benchmarks. On long-video reasoning tasks such as EgoSchema, our method effectively preserves the essential visual semantics required for understanding long sequences, exceeding the second-ranked method by 1.2 pp. Furthermore, on NextQA and Video-MME, which require precise causal reasoning and diverse contextual comprehension, our approach achieves better performance compared to other methods, while removing a significant number of tokens.

\subsection{Computational Efficiency}
\label{efficiency}
To evaluate the practical efficiency of our proposed method, we comprehensively profile the latency, prefill time, and maximum GPU memory allocation (VRAM). All measurements are conducted on a single A6000 GPU using the ScienceQA dataset.
For a fair comparison, we standardize the attention implementation across all baselines to PyTorch's \cite{paszke2019pytorchimperativestylehighperformance} Scaled Dot-Product Attention (SDPA), which leverages FlashAttention (FA) where applicable.
% As shown in \Cref{tab:efficiency}, although DivPrune performs pruning at the projector stage, its reliance on a greedy algorithm incurs computational and memory overhead that ultimately increases generation time. 
% FastV and PACT, in turn, depend on full attention for scoring and clustering, respectively, and therefore achieve only marginal speedups while yielding slightly lower performance than our method. In contrast, our approach performs token reduction within the LLM stage and is fully compatible with FA. Notably, while competitive methods such as CDPruner and SCOPE, which account diversity in token pruning, are slowed down by their greedy procedures, the proposed Progressive Chunked MMR adds only negligible overhead (0.02s), demonstrating the efficiency of our method.
As shown in \Cref{tab:efficiency}, greedy-based methods (DivPrune, CDPruner, SCOPE) suffer from selection overhead, while attention-dependent methods (FastV, PACT) cannot leverage FA for scoring and clustering, respectively. In contrast, our method is fully FA-compatible and Progressive Chunked MMR adds only negligible overhead (0.02s).

% By identifying and keeping only the most informative visual tokens based on our dual-form perspective, our method minimizes performance drops. As a result, our approach achieves the best trade-off by reducing both generation time and peak VRAM, demonstrating its practical value for deploying LVLMs.

% \input{sections/6_related_works}

\section{Conclusion}
\label{sec:conclusion}

% In this paper, we address the computation and memory bottlenecks in LVLMs via visual token pruning. Departing from existing empirical token pruning methods, we introduce a theoretically grounded token pruning framework based on the dual-form perspective of self-attention. By reinterpreting softmax attention as an implicit linear transformation updated by key-value pairs, we formulate token pruning as the selection of the informative rank-1 updates. This perspective enables us to derive a novel, training-free token pruning that explicitly balances information magnitude and directional diversity to identify the most crucial visual tokens.
% Extensive evaluations across diverse benchmarks demonstrate that our principled approach consistently outperforms prior pruning strategies. Notably, our method validates that the dual-form perspective offers a faithful and interpretable lens for analyzing token importance, translating a theoretical framework into practical and scalable efficiency gains for LVLMs.

In this paper, we address the computational bottlenecks in LVLMs through visual token pruning. Unlike existing empirical methods, we introduce another perspective based on the dual form of self-attention. By reinterpreting softmax attention as an implicit linear transformation updated by key-value pairs, we formulate token pruning as selecting the most informative rank-1 updates. This perspective allows us to derive a novel, training-free token pruning method that explicitly balances information magnitude and duplication to identify the most informative visual tokens. Extensive experiments across diverse benchmarks show that our approach consistently outperforms prior pruning methods, in terms of both peformance and efficiency. Notably, our results demonstrate that the dual-form perspective provides a reliable and interpretable way to analyze token importance.
Furthermore, we provide an in-depth analysis of two key aspects related to the informativeness of each token under the dual-form perspective.

\section*{Limitations}
While our dual-form framework offers a unified view of token pruning, it has several limitations.
First, our analysis is layer-wise. We address the deletion error of a single attention layer's dual weight, but does not characterize how this error propagates through subsequent layers, nor how it interacts with residual connections, FFN nonlinearities, and downstream attention. Empirical robustness across pruning layers suggests the local criterion is informative in practice, but a multi-layer analysis remains open.
Second, the theoretical decomposition is derived for a fixed query. We aggregate text-token queries into a single representative to keep selection at $\mathcal{O}(n)$, which empirically performs best among the alternatives we tested. However, this aggregation is not derived from the dual-form perspective itself, and the choice of q remains a design decision rather than a consequence of the theory.
Third, Progressive Chunked MMR is a practical approximation to greedy MMR rather than a method with provable selection guarantees. The chunk schedule and the multiplicative penalty form are tuned empirically.
We leave these directions to future work.

\section*{Acknowledgements}
This work was supported by Institute of Information \& Communications Technology Planning \& Evaluation(IITP) grant funded by the Korea government(MSIT) (RS-2024-00439020, Developing Sustainable, Real-Time Generative AI for Multimodal Interaction, SW Starlab).
% Custom bibliography entries only
% \bibliography{anthology,refs}
\bibliography{refs}

\appendix
\clearpage
% \begin{center}
% \Large\textbf{Token Pruning as Implicit Weight Pruning in Large Vision Language Models: Supplementary Materials}
% % \maketitle
% \end{center}
\appendix
% \onecolumn
\section*{Appendix}

% \crefalias{section}{appendix}

% \renewcommand{\thefigure}{\Alph{section}.\arabic{figure}}
% \renewcommand{\thetable}{\Alph{section}.\arabic{table}}

\counterwithin{figure}{section}
\counterwithin{table}{section}

\renewcommand{\thesection}{\Alph{section}}
\renewcommand{\thefigure}{\thesection.\arabic{figure}}
\renewcommand{\thetable}{\thesection.\arabic{table}}

\renewcommand{\theHsection}{\thesection}
\renewcommand{\theHfigure}{\theHsection.\arabic{figure}}
\renewcommand{\theHtable}{\theHsection.\arabic{table}}

\section{Theoretical Derivation}
Here, we provide theoretical derivation and justification of two criteria of IWP.
\subsection{Hilbert--Schmidt Interpretation of Infinite-Dimensional Feature Maps}
\label{app:hs_infinite_features}

The analysis in the main text is presented using the finite-feature
notation
\begin{align}
\mathbf{W}_N
=
\sum_{i=1}^{N}\phi(\mathbf{k}_i)^{\!\top}\mathbf{v}_i
\in \mathbb{R}^{m\times d_v},
\end{align}
where \(\phi(\mathbf{x})\) is treated as a row vector in
\(\mathbb{R}^{1\times m}\).  This notation is convenient when the kernel
admits a finite-dimensional feature representation.  However, for the
exponential dot-product kernel
\begin{align}
\kappa(\mathbf{x},\mathbf{y})
=
\exp(\mathbf{x}\mathbf{y}^{\!\top}/\sqrt{d_h}),
\end{align}
the associated feature expansion contains monomials of all degrees, and
the corresponding RKHS is infinite-dimensional. This appendix formalizes the finite-feature notation
through Hilbert--Schmidt operators. Note that regardless of finite- or infinite-dimensional RKHSs, the same identities remain valid because
they depend only on finitely many kernel evaluations.
\paragraph{Setup.}
Let \(\mathcal{H}\) be the RKHS associated with \(\kappa\), and let
\(\phi:\mathbb{R}^{d_h}\to\mathcal{H}\) denote its canonical feature map.
Then
\begin{align}
\langle \phi(\mathbf{x}),\phi(\mathbf{y})\rangle_{\mathcal{H}}
=
\kappa(\mathbf{x},\mathbf{y}), \nonumber\\
\|\phi(\mathbf{x})\|_{\mathcal{H}}^{2}
=
\kappa(\mathbf{x},\mathbf{x}).
\label{eq:reproducing}
\end{align}

For \(\mathbf{k}\in\mathbb{R}^{d_h}\) and
\(\mathbf{v}\in\mathbb{R}^{d_v}\), define the rank-one operator
\begin{align}
\phi(\mathbf{k})\otimes\mathbf{v}:\mathcal{H}\to\mathbb{R}^{d_v},
\nonumber\\
h\mapsto
\langle h,\phi(\mathbf{k})\rangle_{\mathcal{H}}\mathbf{v}.
\end{align}
This operator is Hilbert--Schmidt.  We write
\(\mathrm{HS}(\mathcal{H},\mathbb{R}^{d_v})\) for the space of
Hilbert--Schmidt operators from \(\mathcal{H}\) to \(\mathbb{R}^{d_v}\).
For rank-one operators, the Hilbert--Schmidt inner product is
\begin{align}
\langle f\otimes\mathbf{w},\,g\otimes\mathbf{u}\rangle_{\mathrm{HS}}
=
\langle f,g\rangle_{\mathcal{H}}
\langle \mathbf{w},\mathbf{u}\rangle_2 .
\end{align}
Accordingly, the dual weight can be interpreted as
\begin{equation}
\mathbf{W}_N
:=
\sum_{i=1}^{N}\phi(\mathbf{k}_i)\otimes\mathbf{v}_i
\in
\mathrm{HS}(\mathcal{H},\mathbb{R}^{d_v}).
\label{eq:WN_HS}
\end{equation}
Since \(\mathbf{W}_N\) is a finite sum of rank-one operators, it is
finite-rank and hence Hilbert--Schmidt, regardless of whether
\(\mathcal{H}\) is finite- or infinite-dimensional.

\paragraph{Rank-one Hilbert--Schmidt norm.}
\begin{lemma}[Rank-one Hilbert--Schmidt norm]
\label{lem:hs_norm}
For every token \(i\),
\begin{align}
\|\Delta\mathbf{W}_i\|_{\mathrm{HS}}
&=
\|\phi(\mathbf{k}_i)\|_{\mathcal{H}}\|\mathbf{v}_i\|_2 \nonumber\\
&=
\sqrt{\kappa(\mathbf{k}_i,\mathbf{k}_i)}\,\|\mathbf{v}_i\|_2 .
\end{align}
\end{lemma}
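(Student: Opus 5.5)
The plan is to apply the rank-one Hilbert--Schmidt inner product formula stated just above with both arguments equal to $\Delta\mathbf{W}_i=\phi(\mathbf{k}_i)\otimes\mathbf{v}_i$. Taking $f=g=\phi(\mathbf{k}_i)$ and $\mathbf{w}=\mathbf{u}=\mathbf{v}_i$ gives
\begin{align}
\|\Delta\mathbf{W}_i\|_{\mathrm{HS}}^2
=\langle \phi(\mathbf{k}_i),\phi(\mathbf{k}_i)\rangle_{\mathcal{H}}\langle \mathbf{v}_i,\mathbf{v}_i\rangle_2
=\|\phi(\mathbf{k}_i)\|_{\mathcal{H}}^2\|\mathbf{v}_i\|_2^2 .
\end{align}
Both factors are nonnegative, so taking square roots yields the first equality.

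For completeness, I would also justify the inner product formula directly rather than just citing it. Fix an orthonormal basis $\{e_\alpha\}$ of $\mathcal{H}$; it is countable for this separable RKHS, and the Hilbert--Schmidt norm does not depend on which basis is chosen. Then
\begin{align}
\|\phi(\mathbf{k}_i)\otimes\mathbf{v}_i\|_{\mathrm{HS}}^2
=\sum_\alpha \|\langle e_\alpha,\phi(\mathbf{k}_i)\rangle_{\mathcal{H}}\mathbf{v}_i\|_2^2
=\|\mathbf{v}_i\|_2^2\sum_\alpha |\langle e_\alpha,\phi(\mathbf{k}_i)\rangle_{\mathcal{H}}|^2 .
\end{align}
By Parseval's identity the last sum equals $\|\phi(\mathbf{k}_i)\|_{\mathcal{H}}^2$.

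The second equality follows from the reproducing property in \eqref{eq:reproducing}, namely $\|\phi(\mathbf{k}_i)\|_{\mathcal{H}}^2=\kappa(\mathbf{k}_i,\mathbf{k}_i)$. This quantity equals $\exp(\|\mathbf{k}_i\|_2^2/\sqrt{d_h})>0$, so its square root is well defined.

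No step is a genuine obstacle. The only point needing care is the infinite-dimensional setting, where the Parseval argument replaces the finite-dimensional identity $\|\mathbf{a}^\top\mathbf{b}\|_F=\|\mathbf{a}\|_2\|\mathbf{b}\|_2$. That argument requires only that $\phi(\mathbf{k}_i)\in\mathcal{H}$, which holds by construction of the canonical feature map.
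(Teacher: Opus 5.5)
Your proof is correct and matches the paper's argument: specialize the rank-one Hilbert--Schmidt inner product identity to $f=g=\phi(\mathbf{k}_i)$ and $\mathbf{w}=\mathbf{u}=\mathbf{v}_i$, take square roots, and apply the reproducing property $\|\phi(\mathbf{k}_i)\|_{\mathcal{H}}^2=\kappa(\mathbf{k}_i,\mathbf{k}_i)$. Your Parseval derivation of that identity, which the paper only asserts, is valid, and it does not need your separability remark, since Parseval's identity holds for any orthonormal basis.
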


\begin{proof}
For any \(f\in\mathcal{H}\) and \(\mathbf{w}\in\mathbb{R}^{d_v}\),
\begin{align}
\|f\otimes\mathbf{w}\|_{\mathrm{HS}}^2
&=
\langle f,f\rangle_{\mathcal{H}}
\langle \mathbf{w},\mathbf{w}\rangle_2 \nonumber\\
&=
\|f\|_{\mathcal{H}}^2\|\mathbf{w}\|_2^2 .
\end{align}
Taking \(f=\phi(\mathbf{k}_i)\) and \(\mathbf{w}=\mathbf{v}_i\), and
using \eqref{eq:reproducing}, gives the result.
\end{proof}

\paragraph{Pairwise Hilbert--Schmidt inner products.}
\begin{lemma}[Pairwise Hilbert--Schmidt inner product]
\label{lem:hs_inner}
For any two tokens \(i\) and \(j\),
\begin{equation}
\langle
\Delta\mathbf{W}_i,
\Delta\mathbf{W}_j
\rangle_{\mathrm{HS}}
=
(\mathbf{v}_i\!\cdot\!\mathbf{v}_j)\,
\kappa(\mathbf{k}_i,\mathbf{k}_j).
\label{eq:hs_inner}
\end{equation}
\end{lemma}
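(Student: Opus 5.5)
The proof is a direct application of the rank-one Hilbert--Schmidt inner product formula stated in the setup, in the same way as the proof of \Cref{lem:hs_norm}.

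First, I would recall that, by \eqref{eq:WN_HS}, each token's update is the rank-one operator
\(\Delta\mathbf{W}_i = \phi(\mathbf{k}_i)\otimes\mathbf{v}_i \in \mathrm{HS}(\mathcal{H},\mathbb{R}^{d_v})\).

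Next, I would apply the identity
\(\langle f\otimes\mathbf{w},\,g\otimes\mathbf{u}\rangle_{\mathrm{HS}}=\langle f,g\rangle_{\mathcal{H}}\langle\mathbf{w},\mathbf{u}\rangle_2\)
with \(f=\phi(\mathbf{k}_i)\), \(\mathbf{w}=\mathbf{v}_i\), \(g=\phi(\mathbf{k}_j)\), and \(\mathbf{u}=\mathbf{v}_j\). This gives
\(\langle\Delta\mathbf{W}_i,\Delta\mathbf{W}_j\rangle_{\mathrm{HS}}=\langle\phi(\mathbf{k}_i),\phi(\mathbf{k}_j)\rangle_{\mathcal{H}}\,(\mathbf{v}_i\cdot\mathbf{v}_j)\).

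Finally, I would invoke the reproducing property \eqref{eq:reproducing} to replace \(\langle\phi(\mathbf{k}_i),\phi(\mathbf{k}_j)\rangle_{\mathcal{H}}\) by \(\kappa(\mathbf{k}_i,\mathbf{k}_j)\). This yields \eqref{eq:hs_inner}.

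The only step that is not pure substitution is the rank-one inner product formula itself, which the setup states without proof. To make the argument self-contained, I would justify it as follows.

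Fix an orthonormal basis \(\{e_\alpha\}\) of \(\mathcal{H}\), which is countable if \(\mathcal{H}\) is separable, as it is for a continuous kernel on \(\mathbb{R}^{d_h}\). Write
\(\langle A,B\rangle_{\mathrm{HS}}=\sum_\alpha\langle Ae_\alpha,Be_\alpha\rangle_2\).
Then
\[
\sum_\alpha\langle e_\alpha,f\rangle_{\mathcal{H}}\,\langle e_\alpha,g\rangle_{\mathcal{H}}\,\langle\mathbf{w},\mathbf{u}\rangle_2=\langle f,g\rangle_{\mathcal{H}}\,\langle\mathbf{w},\mathbf{u}\rangle_2
\]
by Parseval's identity. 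The series converges absolutely by Cauchy--Schwarz, so the sum is well defined and independent of the choice of basis.

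I expect this Parseval step to be the main, if minor, point of care, since it is the only place where the possible infinite dimensionality of \(\mathcal{H}\) matters. Everything else reduces to finitely many kernel evaluations, consistent with the remark at the start of this appendix.
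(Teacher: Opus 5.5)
Your proposal is correct and follows the paper's proof exactly: apply the rank-one Hilbert--Schmidt inner product identity with $f=\phi(\mathbf{k}_i)$, $\mathbf{w}=\mathbf{v}_i$, $g=\phi(\mathbf{k}_j)$, $\mathbf{u}=\mathbf{v}_j$, then use the reproducing property to replace $\langle\phi(\mathbf{k}_i),\phi(\mathbf{k}_j)\rangle_{\mathcal{H}}$ by $\kappa(\mathbf{k}_i,\mathbf{k}_j)$. Your Parseval argument for the rank-one identity, which the paper simply asserts in its setup, is also correct and is an optional supplement.
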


\begin{proof}
By the rank-one Hilbert--Schmidt inner product identity,

\begin{align*}
\langle
&\phi(\mathbf{k}_i)\otimes\mathbf{v}_i,\,
\phi(\mathbf{k}_j)\otimes\mathbf{v}_j
\rangle_{\mathrm{HS}} \\
&=
\langle
\phi(\mathbf{k}_i),
\phi(\mathbf{k}_j)
\rangle_{\mathcal{H}}
\langle
\mathbf{v}_i,
\mathbf{v}_j
\rangle_2  \\
&=
\kappa(\mathbf{k}_i,\mathbf{k}_j)
(\mathbf{v}_i\!\cdot\!\mathbf{v}_j),
\end{align*}

where the last equality follows from \eqref{eq:reproducing}.
\end{proof}

\paragraph{Cosine factorization.}
\begin{proposition}[Cosine factorization]
\label{prop:sim_factor}
Assume \(\mathbf{v}_i,\mathbf{v}_j\neq 0\) and
\(\kappa(\mathbf{k}_i,\mathbf{k}_i),
\kappa(\mathbf{k}_j,\mathbf{k}_j)>0\).  Then the Hilbert--Schmidt cosine
between two rank-one updates factors as
\begin{align}
&\frac{
\langle
\Delta\mathbf{W}_i,
\Delta\mathbf{W}_j
\rangle_{\mathrm{HS}}
}{
\|\Delta\mathbf{W}_i\|_{\mathrm{HS}}
\|\Delta\mathbf{W}_j\|_{\mathrm{HS}}
}
= \nonumber\\
&
\Bigg(
\frac{\mathbf{v}_i\!\cdot\!\mathbf{v}_j}
{\|\mathbf{v}_i\|_2\|\mathbf{v}_j\|_2}
\Bigg)
\Bigg(
\frac{\kappa(\mathbf{k}_i,\mathbf{k}_j)}
{\sqrt{
\kappa(\mathbf{k}_i,\mathbf{k}_i)
\kappa(\mathbf{k}_j,\mathbf{k}_j)
}}
\Bigg).
\label{eq:sim_factor}
\end{align}
Thus the similarity between two dual-weight updates decomposes into a
value-space cosine and a normalized kernel similarity between their
keys.  In particular, for the exponential dot-product kernel,
\begin{equation}
\frac{\kappa(\mathbf{k}_i,\mathbf{k}_j)}
{\sqrt{
\kappa(\mathbf{k}_i,\mathbf{k}_i)
\kappa(\mathbf{k}_j,\mathbf{k}_j)
}}
=
\exp\!\left(
-\frac{
\|\mathbf{k}_i-\mathbf{k}_j\|_2^2
}{2\sqrt{d_h}}
\right).
\end{equation}
\end{proposition}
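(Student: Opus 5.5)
The plan is to divide \Cref{lem:hs_inner} by the product of the two norms from \Cref{lem:hs_norm}, and then simplify the normalized exponential kernel by completing the square in the exponent.

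\textbf{Step 1: the general factorization.} By \Cref{lem:hs_inner}, the numerator of \eqref{eq:sim_factor} equals $(\mathbf{v}_i\cdot\mathbf{v}_j)\,\kappa(\mathbf{k}_i,\mathbf{k}_j)$. By \Cref{lem:hs_norm}, the denominator equals $\sqrt{\kappa(\mathbf{k}_i,\mathbf{k}_i)}\,\|\mathbf{v}_i\|_2\,\sqrt{\kappa(\mathbf{k}_j,\mathbf{k}_j)}\,\|\mathbf{v}_j\|_2$. The assumptions $\mathbf{v}_i,\mathbf{v}_j\neq 0$ and $\kappa(\mathbf{k}_i,\mathbf{k}_i),\kappa(\mathbf{k}_j,\mathbf{k}_j)>0$ make this denominator strictly positive, so the quotient is well defined. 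Regrouping the scalar factors, with the value terms in one ratio and the kernel terms in the other, gives \eqref{eq:sim_factor} directly. Everything here is finitely many kernel evaluations, so nothing depends on whether $\mathcal{H}$ is infinite-dimensional.

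\textbf{Step 2: the exponential kernel.} Substitute $\kappa(\mathbf{x},\mathbf{y})=\exp(\mathbf{x}\mathbf{y}^\top/\sqrt{d_h})$. The normalized kernel becomes
\[
\exp\!\Big(\tfrac{1}{\sqrt{d_h}}\big(\mathbf{k}_i\mathbf{k}_j^\top-\tfrac12\|\mathbf{k}_i\|_2^2-\tfrac12\|\mathbf{k}_j\|_2^2\big)\Big).
\]
The exponent in parentheses equals $-\tfrac12\|\mathbf{k}_i-\mathbf{k}_j\|_2^2$, which is the claimed Gaussian RBF form with bandwidth $2\sqrt{d_h}$. Positivity of the diagonal values of $\kappa$ holds automatically here, since the exponential is always positive.

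\textbf{Main obstacle.} I do not expect a genuine difficulty; the argument is routine algebra. The only point needing care is the notational consistency of the scaling. The main text writes $\sqrt{d}$, while the appendix writes $\sqrt{d_h}$ (the head dimension). I would state explicitly that the same constant appears in both the kernel and the bandwidth, so the factor of two in $2\sqrt{d_h}$ comes out correctly.
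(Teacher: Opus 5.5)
Your proposal is correct and matches the paper's proof: the factorization comes from dividing \Cref{lem:hs_inner} by the product of the norms in \Cref{lem:hs_norm}, and the RBF form comes from combining the exponents $\mathbf{k}_i\mathbf{k}_j^\top/\sqrt{d_h}-(\|\mathbf{k}_i\|_2^2+\|\mathbf{k}_j\|_2^2)/(2\sqrt{d_h})$ into $-\|\mathbf{k}_i-\mathbf{k}_j\|_2^2/(2\sqrt{d_h})$. Your added remarks are accurate but only make explicit what the paper leaves implicit: the denominator is strictly positive, and $\sqrt{d}$ versus $\sqrt{d_h}$ is a notational inconsistency.
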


\begin{proof}
The factorization follows immediately from
\Cref{lem:hs_norm,lem:hs_inner}.  For the exponential dot-product
kernel,
\[
\begin{aligned}
\frac{\kappa(\mathbf{x},\mathbf{y})}
{\sqrt{\kappa(\mathbf{x},\mathbf{x})\kappa(\mathbf{y},\mathbf{y})}}
&=
\exp\!\left(
\frac{\mathbf{x}\mathbf{y}^{\!\top}}{\sqrt{d_h}}
-
\frac{\|\mathbf{x}\|_2^2+\|\mathbf{y}\|_2^2}{2\sqrt{d_h}}
\right) \\
&=
\exp\!\left(
-\frac{\|\mathbf{x}-\mathbf{y}\|_2^2}{2\sqrt{d_h}}
\right),
\end{aligned}
\]
which proves the stated expression.
\end{proof}

\paragraph{Score decomposition.}
\begin{proposition}[Angular--key--value decomposition of the score]
\label{prop:score_factor}
Assume \(\phi(\mathbf{q})\neq 0\) and \(\phi(\mathbf{k}_i)\neq 0\).
The score used in \Cref{sec:importance} satisfies
\begin{align}
\mathrm{Score}_i
&:=
\kappa(\mathbf{q},\mathbf{k}_i)\|\mathbf{v}_i\|_2 \nonumber\\
&=
{
\|\phi(\mathbf{q})\|_{\mathcal{H}}
}
\cdot
{
\cos\theta_i
}
\cdot
{
\|\Delta\mathbf{W}_i\|_{\mathrm{HS}}
}
\end{align}
where
\begin{equation}
\cos\theta_i
:=
\frac{
\langle
\phi(\mathbf{q}),
\phi(\mathbf{k}_i)
\rangle_{\mathcal{H}}
}{
\|\phi(\mathbf{q})\|_{\mathcal{H}}
\|\phi(\mathbf{k}_i)\|_{\mathcal{H}}
}.
\end{equation}
\end{proposition}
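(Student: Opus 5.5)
The identity is a chain of substitutions, and I would run it from right to left. I start from the product $\|\phi(\mathbf{q})\|_{\mathcal{H}}\cos\theta_i\,\|\Delta\mathbf{W}_i\|_{\mathrm{HS}}$. The hypotheses $\phi(\mathbf{q})\neq 0$ and $\phi(\mathbf{k}_i)\neq 0$ make the denominator in the definition of $\cos\theta_i$ nonzero, so $\cos\theta_i$ is well defined. I substitute that definition and obtain
\[
\|\phi(\mathbf{q})\|_{\mathcal{H}}\cos\theta_i
=
\frac{\langle \phi(\mathbf{q}),\phi(\mathbf{k}_i)\rangle_{\mathcal{H}}}{\|\phi(\mathbf{k}_i)\|_{\mathcal{H}}}.
\]

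Next I invoke \Cref{lem:hs_norm} to replace $\|\Delta\mathbf{W}_i\|_{\mathrm{HS}}$ by $\|\phi(\mathbf{k}_i)\|_{\mathcal{H}}\|\mathbf{v}_i\|_2$. The factor $\|\phi(\mathbf{k}_i)\|_{\mathcal{H}}$ then cancels against the denominator above; this cancellation is legitimate because it is nonzero by assumption. What remains is $\langle \phi(\mathbf{q}),\phi(\mathbf{k}_i)\rangle_{\mathcal{H}}\,\|\mathbf{v}_i\|_2$.

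Finally, the reproducing property \eqref{eq:reproducing} gives $\langle \phi(\mathbf{q}),\phi(\mathbf{k}_i)\rangle_{\mathcal{H}}=\kappa(\mathbf{q},\mathbf{k}_i)$. The remaining expression is therefore $\kappa(\mathbf{q},\mathbf{k}_i)\|\mathbf{v}_i\|_2=\mathrm{Score}_i$, which is the claimed identity.

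There is no genuine obstacle in this argument. The only points needing care are that every step holds in the possibly infinite-dimensional RKHS $\mathcal{H}$ and that the cancellation is justified by the nonvanishing assumptions. Both are covered by the Hilbert--Schmidt framework set up above, since each step uses only finitely many kernel evaluations. For completeness, I would also note that $\|\phi(\mathbf{k}_i)\|_{\mathcal{H}}^2=\kappa(\mathbf{k}_i,\mathbf{k}_i)=\exp(\|\mathbf{k}_i\|_2^2/\sqrt{d_h})>0$, so the assumption on $\phi(\mathbf{k}_i)$ holds automatically for the exponential kernel, and the same computation applies to $\phi(\mathbf{q})$.
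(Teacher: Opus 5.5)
Your proof is correct and uses the same ingredients as the paper's: the reproducing property, the definition of $\cos\theta_i$, and \Cref{lem:hs_norm}. The only difference is direction, since the paper starts from $\kappa(\mathbf{q},\mathbf{k}_i)=\|\phi(\mathbf{q})\|_{\mathcal{H}}\|\phi(\mathbf{k}_i)\|_{\mathcal{H}}\cos\theta_i$, multiplies by $\|\mathbf{v}_i\|_2$ and substitutes the lemma, while you run the same chain in reverse; your remark that $\kappa(\mathbf{k}_i,\mathbf{k}_i)>0$ makes the nonvanishing hypotheses automatic for the exponential kernel is correct and a small bonus.
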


\begin{proof}
By the reproducing property,
\begin{align}
\kappa(\mathbf{q},\mathbf{k}_i)
&=
\langle
\phi(\mathbf{q}),
\phi(\mathbf{k}_i)
\rangle_{\mathcal{H}}\\
&=
\|\phi(\mathbf{q})\|_{\mathcal{H}}
\|\phi(\mathbf{k}_i)\|_{\mathcal{H}}
\cos\theta_i .
\end{align}
Multiplying both sides by \(\|\mathbf{v}_i\|_2\) and using
\Cref{lem:hs_norm},
\begin{equation}
\|\phi(\mathbf{k}_i)\|_{\mathcal{H}}\|\mathbf{v}_i\|_2
=
\|\Delta\mathbf{W}_i\|_{\mathrm{HS}},
\end{equation}
yields the result.
\end{proof}

\subsection{Exact Deletion Effect vs. Proposed Importance Score}
\label{app:exact_deletion_effect}
While the proposed information magnitude metric is exactly equivalent to the deletion effect in the unnormalized dual-weight space, it may differ in standard softmax attention due to the presence of the normalization term. In this section, we formally derive the exact token deletion effect, analyze its differences from our proposed importance score, and empirically validate the impact of the underlying approximations.

To formalize this, let $a_i = \kappa(\mathbf{q}, \mathbf{k}_i)$ denotes the query-key similarity for token $i$, and $Z = \sum_j a_j$ be the normalization factor. The normalized attention weight is given by $w_i = a_i / Z$, and the standard softmax attention output computed over the given tokens is $\mathbf{o} = \sum_{j} w_j \mathbf{v}_j$. 
The exact effect of deleting token $i$ on the normalized output is defined as:
\begin{equation}
\lVert \mathbf{o} - \mathbf{o}_{-i} \rVert_2 = \frac{w_i}{1 - w_i}\lVert \mathbf{v}_i - \mathbf{o} \rVert_2,
\end{equation}
where $\mathbf{o}_{-i} = \sum_{j \neq i} \frac{a_j}{Z - a_i} \mathbf{v}_j$ denote the attention output after removing token $i$.
Substituting $w_i = a_i / Z$ into the equation yields:
\begin{equation}
\lVert \mathbf{o} - \mathbf{o}_{-i} \rVert_2 = \frac{1}{Z} \frac{a_i}{1 - w_i}\lVert \mathbf{v}_i - \mathbf{o} \rVert_2.
\end{equation}
For a fixed query, the normalizer $Z$ is a constant shared across all tokens. Therefore, omitting the factor $1/Z$ does not alter the relative ordering of tokens. This provides a rank-equivalent exact deletion effect $E_i$:
\begin{equation}
E_i \propto \frac{a_i}{1 - w_i}\lVert \mathbf{v}_i - \mathbf{o} \rVert_2.
\end{equation}
Our proposed score, $\mathrm{Score}_i = a_i\lVert \mathbf{v}_i \rVert_2 = \kappa(\mathbf{q},\mathbf{k}_i)\lVert \mathbf{v}_i \rVert_2$, approximates this rank-equivalent metric $E_i$ by applying two approximations: (1) dropping the normalization factor $\frac{1}{1 - w_i}$ and (2) replacing the centered value vector norm $\lVert \mathbf{v}_i - \mathbf{o} \rVert_2$ with the uncentered norm $\lVert \mathbf{v}_i \rVert_2$. 

\paragraph{Dropping the Normalization Factor.}
Removing the factor $\frac{1}{1 - w_i}$ from $E_i$ yields an intermediate centered score: $a_i \lVert \mathbf{v}_i - \mathbf{o} \rVert_2 = \kappa(\mathbf{q},\mathbf{k}_i)\lVert \mathbf{v}_i - \mathbf{o} \rVert_2$. This simplification effectively ignores an $\frac{1}{1 - w_i}$ correction term.

Empirically, omitting this factor does not alter token rankings. In Qwen2.5-VL-7B, the attention shares of individual visual tokens remain sufficiently small: the 95th percentile of per-head $w_i$ is 0.18, with a worst-case of 0.35. Consequently, the omitted factor $\frac{1}{1-w_i}$ remains strictly below 1.6 and exhibits minimal variance across tokens, making it mathematically too small to reorder the importance scores. Rescoring every token with the exact coefficient ($\frac{a_i}{1-w_i}\lVert \mathbf{v}_i \rVert_2$ instead of $a_i\lVert \mathbf{v}_i \rVert_2$) changed no token ranking in any of the evaluated cases.

\paragraph{Uncentered Norm vs. Centered Norm.}
Our metric further replaces the centered norm $\lVert \mathbf{v}_i - \mathbf{o} \rVert_2$ with the uncentered norm $\lVert \mathbf{v}_i \rVert_2$. This query-independent replacement avoids the computational overhead of computing the full attention output $\mathbf{o}$ prior to the pruning process.

We evaluated the centered score variant, $\kappa(\mathbf{q},\mathbf{k}_i)\lVert \mathbf{v}_i - \mathbf{o} \rVert_2$, across ten image benchmarks at a 35.3\% budget. The empirical results demonstrate minimal performance fluctuations, ranging from $-0.65$ to $+0.43$ percentage points compared to the uncentered metric. Moreover, the top-$K$ overlap between the centered and uncentered scores is strictly $\ge 0.947$, and the rank correlation of the centered score with the exact deletion effect is 0.997. These findings confirm that the uncentered norm $\lVert \mathbf{v}_i \rVert_2$ is an accurate and computationally efficient surrogate, with the centered score serving as a validated variant if the overhead of computing $\mathbf{o}$ is acceptable.

\section{Benchmarks and Models}
\label{supp:benchmarks}
\subsection{Evaluation benchmarks}

\subsubsection{Text-centric visual understanding benchmarks.} \hfill

\noindent \textbf{AI2D}~\cite{kembhavi2016diagram} A dataset consisting of over 4,900 grade school science diagrams annotated with rich ground-truth information, including object classes, relationships, and text elements. It is designed to evaluate a model's ability to understand complex informational graphics and answer multiple-choice questions that require reasoning over both visual and textual diagrammatic components.

\noindent \textbf{TextVQA}~\cite{singh2019towards} A dataset designed to evaluate models on visual question answering tasks that require reading and reasoning about text present within images. It contains 28,408 images sourced from the Open Images dataset, paired with 45,336 questions. Models must effectively perform optical character recognition (OCR) and integrate the recognized text with visual context to provide accurate answers. We use the validation split for evaluation.

\noindent \textbf{DocVQA}~\cite{mathew2021docvqa} A large-scale visual question answering dataset focused on document understanding. It comprises 12,767 diverse document images, ranging from printed forms to typed letters, accompanied by over 50,000 questions. It is designed to test a model's ability to comprehend text, spatial layout, and structural information within scanned documents. We use the validation split for evaluation.

\noindent \textbf{Infographic VQA}~\cite{mathew2022infographicvqa} A benchmark extending the document understanding task to complex infographics. It requires models to process and reason over diverse textual, graphical, and layout elements, often necessitating numerical reasoning and data extraction from charts and visual graphics.

\subsubsection{Knowledge-intensive multimodal reasoning benchmarks.} \hfill

\noindent \textbf{MMBench}~\cite{liu2024mmbench} A comprehensive evaluation pipeline for assessing the fine-grained capabilities of Vision-Language Models. It features around 3,000 multiple-choice questions covering 20 ability dimensions. It introduces a circular evaluation strategy, where the choices of a question are circularly shifted to robustly test whether a model consistently predicts the correct answer regardless of the option's position.

\noindent \textbf{MMMU}~\cite{yue2024mmmu} A Massive Multi-discipline Multimodal Understanding benchmark designed to evaluate college-level reasoning abilities in multimodal models. It contains 11.5K meticulously collected questions spanning 6 core disciplines (Art \& Design, Business, Science, Health \& Medicine, Humanities \& Social Science, and Tech \& Engineering) and 30 subjects, requiring deep domain knowledge and complex reasoning. We use the validation split for evaluation.

\noindent \textbf{MMStar}~\cite{chen2024we} An evaluation benchmark featuring 1,500 highly challenging samples specifically curated to test true visual dependency in multimodal models. The dataset is filtered to remove samples that can be answered through text-only guessing or common sense without viewing the image, thus strictly evaluating the model's authentic visual understanding and multimodal reasoning capabilities.

\noindent \textbf{ScienceQA}~\cite{lu2022learn} A large-scale multimodal multiple-choice question answering benchmark focused on diverse scientific domains. It contains 21,208 questions spanning natural science, language science, and social science, categorized into 26 topics, 127 categories, and 379 skills. Among them, 48.7\% include image context, 48.2\% include text context, and 30.8\% include both. A majority of questions are annotated with grounded lectures and detailed explanations, offering external knowledge and reasoning to support the correct answer. We use the test split that includes image context for evaluation.

\subsubsection{Comprehensive perception, cognition, and hallucination benchmarks.} \hfill

\noindent \textbf{MME}~\cite{fu2025mme} A comprehensive evaluation benchmark for multimodal large language models that spans 14 diverse subtasks. These tasks are broadly categorized into perception (e.g., object recognition, OCR, color, position) and cognition (e.g., commonsense reasoning, numerical calculation, text translation). It uses instruction-based prompts to systematically measure both basic visual recognition and higher-level reasoning.

\noindent \textbf{POPE}~\cite{li2023evaluating} A Polling-based Object Probing Evaluation benchmark designed to systematically assess object hallucination in large vision-language models. Instead of relying on generative descriptions, POPE formulates the evaluation as a binary classification task (Yes/No questions) inquiring about the presence of specific objects in an image. It evaluates hallucination under different settings, including random, popular, and adversarial object probing.

\begin{algorithm*}[t]
\caption{Progressive Chunked Maximal Marginal Relevance}
\label{alg:parallel_mmr}
\begin{algorithmic}[1]
\REQUIRE Visual tokens $\mathbf{Z}_{img} \in \mathbb{R}^{N_{img} \times d}$; target size $M$; chunk size $b$; growth factor $g$; penalty strength $\lambda$.
\STATE $\mathcal{C} \leftarrow \emptyset$ \quad // Initialize selected index set
\STATE $\mathcal{U} \leftarrow \{1, \dots, N_{img}\}$ \quad // Initialize unselected candidates
\STATE \textit{// Calculate information magnitude scores}
\STATE $P \leftarrow \{P_1, \dots, P_{N_{img}}\}$, where $P_i=\kappa(q_{T}, k_i)\|v_i\|_2$
\WHILE{$|\mathcal{C}| < M$}
    \STATE $k \leftarrow \min(b, M - |\mathcal{C}|)$ \quad // Current selection budget
    \STATE $L_{new} \leftarrow \text{arg\,top-}k_{i \in \mathcal{U}} \{ P_i \}$
    \STATE $\mathcal{C} \leftarrow \mathcal{C} \cup L_{new}$
    \STATE $\mathcal{U} \leftarrow \mathcal{U} \setminus L_{new}$
    \STATE \textit{// Chunked Information Duplication computation}
    \STATE $S \leftarrow S(\mathcal{V}_{\mathcal{U}}, \mathcal{V}_{L_{new}})$
    \STATE $s_{max} \leftarrow \max(S, \mathrm{dim}{=}1)$
    \STATE $P_{\mathcal{U}} \leftarrow P_{\mathcal{U}} \odot \max(0.01, 1 - \lambda \cdot s_{max})$ \label{line:penalty_update}
    \STATE $b \leftarrow b \cdot g$
\ENDWHILE
\RETURN $\mathcal{C}$
\end{algorithmic}
\end{algorithm*}

\subsubsection{Video understanding and temporal reasoning benchmarks.}\hfill

\noindent \textbf{EgoSchema}~\cite{mangalam2023egoschema} A very long-form egocentric video understanding benchmark spanning over 250 hours of real-world video footage. It features multiple-choice questions that require complex temporal reasoning over 3-minute video clips. The benchmark specifically targets tasks that cannot be solved by looking at a single frame, pushing models to understand long-term human activities and intentions.

\noindent \textbf{Video-MME}~\cite{fu2025videomme} A comprehensive benchmark for evaluating multimodal large language models on video analysis. It covers various video lengths (short, medium, and long up to 1 hour) and spans diverse domains including movies, sports, and daily life. The evaluation tasks encompass spatial perception, temporal reasoning, and overall narrative comprehension.

\noindent \textbf{MLVU}~\cite{zhou2025mlvu} A Multi-task Long Video Understanding benchmark designed to assess the capabilities of models in processing and reasoning over extended video sequences. It incorporates a wide array of tasks that require capturing both fine-grained temporal details and global semantic context across long video streams.

\noindent \textbf{NExT-QA}~\cite{xiao2021nextqa} A video question answering benchmark focusing heavily on causal and temporal action reasoning. Moving beyond simple action recognition, NExT-QA challenges models to answer ``why'' and ``how'' questions, requiring an understanding of the causal relationships between events and the temporal order of actions within the video.

\subsection{Evaluated models}

To comprehensively evaluate the generality and effectiveness of our proposed token pruning method, we conduct experiments on two state-of-the-art open-source Vision-Language Models (VLMs) that employ different visual encoding strategies.

\noindent \textbf{LLaVA-OneVision ~\cite{li2025llavaonevision}} LLaVA-OneVision is a family of large multimodal models designed for robust visual understanding across single images, multiple images, and videos. We specifically utilize the model that integrates the Qwen2-7B language model backbone with a SigLIP visual encoder. To handle high-resolution inputs, it employs the AnyRes strategy, which divides an image into a global contextual patch and multiple high-resolution local patches. This process inherently generates a massive sequence of visual tokens, making it an ideal architecture to validate the efficiency and performance retention of our token reduction approach.

\noindent \textbf{Qwen2.5-VL-7B}~\cite{yang2025qwen25} Qwen2.5-VL is the latest iteration of the Qwen-VL series, featuring significant advancements in comprehending images and videos of various resolutions and aspect ratios. Built upon the Qwen2.5-7B large language model, it incorporates a sophisticated Vision Transformer (ViT) architecture with dynamic resolution support. This allows the model to adaptively process visual inputs into a variable number of tokens based on the image size and complexity, enabling exceptional performance in fine-grained detail perception, OCR, and complex reasoning. Evaluating our method on this model demonstrates its robustness and flexibility across dynamically scaling token sequences.

% \subsection{Visual and Text Token Similarity Analysis}
% Image-to-Image와 Image-to-Text 간 유사도 양상을 비교 분석한다.  
% 실험 결과, Image-to-Image 간 Key 유사도가 Image-to-Text 간 Key 유사도보다 평균적으로 약 2배 높게 나타났다.  
% 이에 따라 본 논문에서는 Image-to-Image 유사도를 기준으로 Image token의 redundancy를 측정하는 전략을 채택한다.

% Having established the Dual Similarity metric, a natural question arises regarding its application scope within Vision-Language Models, which process both image and text tokens. Empirically, we observe that cross-modal pairs (Image-to-Text) exhibit a Dual Similarity approaching zero (mean $\approx 0.002$). This indicates that image and text tokens naturally span independent, orthogonal subspaces. In contrast, intra-modal pairs (Image-to-Image) show significantly high similarity (mean $\approx 0.758$), confirming severe collinearity. Therefore, we can safely restrict our redundancy evaluation exclusively to intra-modal image tokens. This targeted approach explicitly resolves the structural degradation caused by visual token pruning, while preserving cross-modal independence and minimizing computational overhead.

\section{Implementation Details}
\label{supp:imp_details}
For token pruning, we remove tokens at the beginning of each layer (i.e., prior to the attention block). To determine the information magnitude, we first extract the query, key, and value projections for each head to calculate their respective magnitudes. Then we compute the mean information magnitude along the heads. Because this process is independent of the attention computation itself, our proposed method is orthogonal to modern attention acceleration implementations, such as FlashAttention. For both the information magnitude and information duplication measurements, we normalize the results to a range of $[0, 1]$. For the comparative experiments, we implemented baseline methods using their official public repositories. For methods with dynamic pruning ratios, such as PACT, we tuned the hyperparameters to align the resulting pruning ratios with the target token budget. 

\section{Progressive Chunked Maximal Marginal Relevance}
In Algorithm \ref{alg:parallel_mmr}, we describe the detailed process of progressive chunked MMR.

\begin{figure*}[!t]
  \centering
  \resizebox{\textwidth}{!}{\includegraphics[]{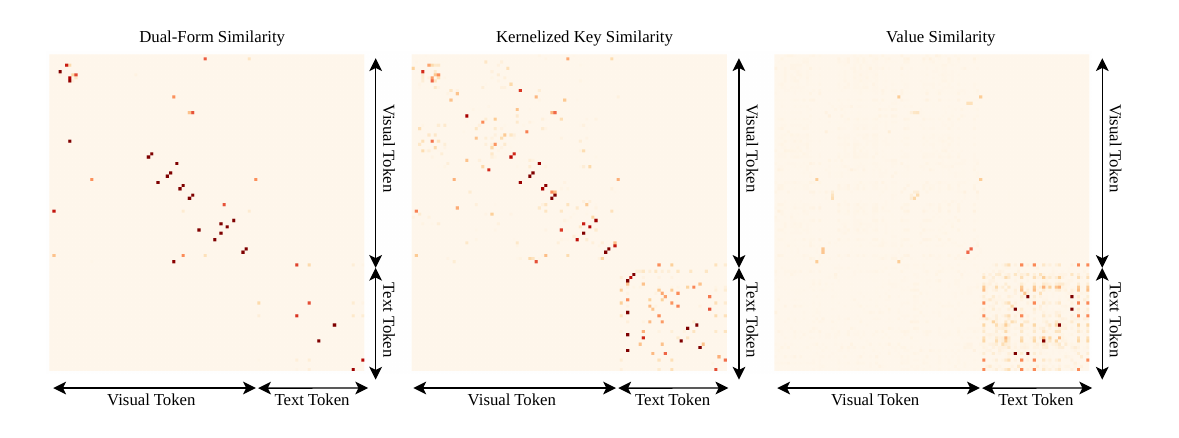}}
  % \caption{index similarity. Diagonal masking for visualization, subset of image token and text token (-200:0)}
  \caption{\textbf{Token similarity visualization in LLaVA-OneVision-7B.} We visualize the similarity between visual and text tokens extracted from Layer 4 across three metrics defined in Eq.~12. For visual clarity, diagonal elements are masked and we use the last 100 tokens of the entire sequence.}
  \label{fig:supp_token_similarity}
\end{figure*}

\section{Further Analyses and Experiments}
\label{supp:further_analyses}
% In this section, we provide further analyses and experiments. 
% We first examine the intra- and inter-similarities of visual and text tokens (\Cref{supp:token sim}.
% We then conduct ablation studies on the two pruning criteria information magnitude and duplication (\Cref{supp:ablation_budget_analysis}) and on the effect of RoPE in each metric (\Cref{supp:ablation_rope}).
% Next, we analyze the effect of pruning layer selection through both performance variation and token-selection consistency across layers (\Cref{supp:pruning_layer_selection}).
% Finally, we assess the scalability of our method on the larger Qwen2.5-VL-32B model (\Cref{supp:scalability}) and ablate the hyperparameters of Progressive Chunked MMR (\Cref{supp:pcmmr_hyperparams}).
In this section, we provide further analyses and experiments.
We first examine the intra- and inter-similarities of visual and text tokens (\Cref{supp:token sim}).
We then conduct ablation studies on the two pruning criteria, information magnitude and duplication (\Cref{supp:ablation_budget_analysis}), and on the effect of RoPE in each metric (\Cref{supp:ablation_rope}).
Next, we analyze the effect of pruning layer selection through both performance variation and token-selection consistency across layers (\Cref{supp:pruning_layer_selection}).
We further assess the scalability of our method on the larger Qwen2.5-VL-32B model (\Cref{supp:scalability}) and ablate the hyperparameters of Progressive Chunked MMR (\Cref{supp:pcmmr_hyperparams}).
Finally, we provide qualitative visualizations to illustrate how our method preserves semantic details in generated descriptions under varying token budgets (\Cref{supp:qualitative}).

\subsection{Visual and Text Token Similarity Analysis}
\label{supp:token sim}

To investigate how visual and text tokens are related, we use three similarity metrics: (1) kernelized key similarity, (2) value similarity, and (3) the square of dual-form similarity. 
According to Eq.~(12) in our manuscript, the dual-form similarity can be factorized into the product of the cosine similarity between kernelized keys in the RKHS and the cosine similarity between value vectors.
We analyze tokens extracted from Layer 4 of LLaVA-OneVision-7B and visualize the last 100 tokens of the sequence for clearer visualization. Results are provided in \Cref{fig:supp_token_similarity}.

Across all three metrics, cross-modality similarity between visual and text tokens remains consistently low, suggesting that these modalities contribute distinct updates to the dual weight space. 
On the other hand, intra-modality patterns vary depending on the metric.
While both modalities exhibit relatively high intra-similarity in the kernelized key space, value similarity reveals that visual tokens are less similar to one another than text tokens.
Notably, under the dual-form similarity, redundancy among text tokens becomes almost negligible, whereas visual tokens form distinct similarity clusters.
These observations suggest that, within the visual token set, there exist redundant tokens that induce highly similar updates to the dual weight. This analysis supports the validity of our token pruning framework, which is designed to selectively remove such redundant visual tokens.

\begin{table*}[!t]
\centering
\setlength{\tabcolsep}{3pt}
\resizebox{\textwidth}{!}{ 
\begin{tabular}{cl cccccccccc |c}
\toprule
\textbf{Budget} & \multicolumn{1}{c}{\textbf{Metric}} & \textbf{AI2D} & \textbf{DocVQA} & \textbf{InfoVQA} & \textbf{MMBench} & \textbf{MME} & \textbf{MMMU} & \textbf{SciQA} & \textbf{TextVQA} & \textbf{MMStar} & \textbf{POPE} & \textbf{Avg. (\%)} \\
\midrule
\multicolumn{2}{c}{Baseline} & 81.3 & 87.1 & 66.0 & 80.7 & 1992.0 & 49.2 & 95.9 & 75.8 & 61.9 & 88.3 & 100.0 \\
\midrule\midrule
\multirow{2}{*}{35.3\%} & Mag. & 79.8 & 85.4 & 61.4 & 79.4 & 2001.5 & 49.0 & 92.0 & 75.6 & 59.1 & 87.6 & 97.8 \\
% \cmidrule{2-13}
 & Mag.+Dup. & 80.5 & 85.3 & 60.6 & 79.6 & 1985.0 & 48.9 & 92.9 & 75.2 & 60.0 & 88.1 & 97.9 \\
\midrule
\multirow{2}{*}{22.2\%} & Mag. & 77.9 & 81.4 & 54.3 & 78.4 & 2003.4 & 48.0 & 91.0 & 73.8 & 56.5 & 85.4 & 94.7 \\
% \cmidrule{2-13}
 & Mag.+Dup. & 79.3 & 82.2 & 56.5 & 78.8 & 1977.9 & 47.8 & 91.5 & 74.4 & 57.3 & 87.3 & 95.6 \\
\midrule
\multirow{2}{*}{11.1\%} & Mag. & 74.7 & 67.6 & 43.2 & 75.9 & 1906.1 & 47.6 & 88.2 & 68.4 & 52.3 & 80.6 & 87.9 \\
% \cmidrule{2-13}
 & Mag.+Dup. & 76.8 & 71.9 & 45.5 & 76.5 & 1947.0 & 47.4 & 89.5 & 70.7 & 54.6 & 84.0 & 90.5 \\
\bottomrule
\end{tabular}
}
\caption{\textbf{Ablation Study on various benchmarks with LLaVA-OneVision-7B.} In all experiments, tokens are pruned after the fourth layer with varying token budgets. Scores are absolute performance, while Avg. (\%) indicates the relative average normalized by the full-token baseline. Budget refers to the token budget. Mag. and Mag.+Dup. refers to pruning with information magnitude only and pruning with both information magnitude and duplication, respectively.}
\label{tab:ablation_budget_analysis}
\end{table*}

\begin{table*}[t]
\centering
% \caption{\textbf{Effect of RoPE in Information Magnitude and Information Duplication metrics.} Magnitude and Duplication columns indicate the inclusion ($\checkmark$) or exclusion ($-$) of Rotary Position Embeddings in the respective computations. Performance across various settings is reported, with the average relative performance (Avg. \%) calculated against the baseline. Scores are rounded for readability.
% }
\resizebox{\linewidth}{!}{
\begin{tabular}{ccc| cccccccccc|c}
\toprule
\textbf{Setting} & \textbf{Magnitude} & \textbf{Duplication} & \textbf{AI2D} & \textbf{DocVQA} & \textbf{InfoVQA} & \textbf{MMBench} & \textbf{MME} & \textbf{MMMU} & \textbf{SciQA} & \textbf{TextVQA} & \textbf{MMStar} & \textbf{POPE} & \textbf{Avg. (\%)} \\
\midrule
(0) & \multicolumn{2}{c}{{Baseline}} & 81.3 & 87.1 & 66.0 & 80.7 & 1992.0 & 49.2 & 95.9 & 75.8 & 61.9 & 88.3 & {100.0} \\
\midrule
(1) & - & \checkmark & 80.5 & 85.3 & 60.6 & 79.6 & 1985.0 & 48.9 & 92.9 & 75.2 & 60.0 & 88.1 & 97.9 \\
(2) & \checkmark& \checkmark & 79.7 & 83.8 & 57.9 & 79.1 & 1990.9 & 48.4 & 93.0 & 74.3 & 59.7 & 88.6 & 97.0 \\
(3) & \checkmark&- & 79.8 & 78.1 & 51.6 & 78.3 & 1903.8 & 44.9 & 93.4 & 72.3 & 58.5 & 88.5 & 94.6 \\
(4) & - & - & 79.7 & 78.8 & 51.2 & 79.0 & 1902.3 & 49.2 & 92.7 & 73.3 & 58.0 & 87.9 & 94.6 \\
\bottomrule
\end{tabular}
}
\caption{\textbf{Effect of RoPE in information magnitude and information duplication metrics.} Magnitude and duplication columns indicate the inclusion ($\checkmark$) or exclusion ($-$) of RoPE in the respective metrics. Reported scores indicate absolute performance on each benchmark. Avg. (\%) represents the average relative performance normalized by the full-token baseline.}
\label{tab:ablation-score-sim}
\end{table*}

\subsection{Ablation Study on Information magnitude and Information Duplication}
\label{supp:ablation_budget_analysis}
% We perform an ablation on the token reduction strategy, comparing "Importance", which refers to pruning with importance scores only, and "Importance + Duplication", which refers to a pruning strategy considering both importance and duplication scores across varying token budgets (35.3\%, 22.2\%, and 11.1\%). 
% The results are reported in \cref{tab:ablation_study}.

% In \cref{sec:method}, we introduce a magnitude-based importance score (\cref{sec:importance}), a similarity metric for quantifying information duplication (\cref{sec:similarity}), and a unified pruning strategy that jointly considers both (\cref{sec:dual-form based token pruning}).
Following the formulation in \Cref{sec:method}, we perform an ablation on the token reduction strategy.
We compare two strategies:
(1) \textbf{Information Magnitude-based pruning} (\Cref{sec:importance}), which selects tokens solely based on the importance score, and
(2) \textbf{Information Magnitude and Duplication-based pruning} (\Cref{sec:dual-form based token pruning}), which additionally incorporates a similarity metric to discourage redundant token selection.
The results are reported across varying token budgets (35.3\%, 22.2\%, and 11.1\%) in \Cref{tab:ablation_budget_analysis}.

For a relatively large token budget (e.g., 35.3\%), the difference between the two strategies is marginal, with both methods achieving nearly 98\% of the baseline performance. However, under an constrained budget (e.g., 11.1\%), the average relative performance of the magnitude-only method degrades to 87.9\%. In contrast, "Magnitude + Duplication" significantly outperforms the importance-only baseline by 2.6 pp.

This performance gap demonstrates that relying solely on information magnitude tends to retain highly informative but redundant tokens. When the budget is sufficient, this redundancy consideration is negligible. However, under strict constraints, it wastes token budgets and leads to the loss of unique information. Accounting for information duplication during token pruning avoids selecting tokens with duplicate information and ensures a more diverse, comprehensive information, thereby effectively mitigating performance degradation at extreme compression rates.

\begin{table*}[!t]
\centering
% \caption{\textbf{Effect of pruning layers on LLaVA-OneVision-7B.} 
% Performance across various benchmarks is reported, with the average relative performance (Avg. \%) calculated against the baseline. Scores are rounded for readability}
\resizebox{\linewidth}{!}{
\begin{tabular}{l cccccccccc|c}
\toprule
\textbf{Method} & \textbf{AI2D} & \textbf{DocVQA} & \textbf{InfoVQA} & \textbf{MMBench} & \textbf{MME} & \textbf{MMMU} & \textbf{SciQA} & \textbf{TextVQA} & \textbf{MMStar} & \textbf{POPE} & \textbf{Avg. (\%)} \\
\midrule
\rowcolor{gray!10}
\textit{Baseline} & 81.3 & 87.1 & 66.0 & 80.7 & 1992.0 & 49.2 & 95.9 & 75.8 & 61.9 & 88.3 & {100.0} \\
\midrule
\rowcolor{blue!5} \multicolumn{12}{l}{\textit{Layers}} \\
0  & 79.5 & 71.0 & 48.8 & 79.8 & 1906.3 & 48.9 & 93.7 & 69.4 & 59.8 & 87.3 & {93.2} \\
2  & 79.6 & 80.2 & 55.0 & 79.2 & 1955.5 & 47.1 & 90.7 & 72.5 & 57.8 & 87.4 & {94.8} \\
4 & 80.4 & 85.3 & 60.6 & 79.5 & 1985.0 & 48.8 & 92.8 & 75.1 & 60.0 & 88.0 & {97.9} \\
6  & 79.9 & 83.6 & 60.4 & 80.4 & 1958.9 & 49.4 & 94.1 & 75.7 & 59.6 & 88.4 & {97.9} \\
8  & 80.2 & 84.9 & 59.5 & 80.1 & 1985.3 & 49.3 & 93.4 & 75.8 & 59.3 & 88.2 & {97.8} \\
10 & 80.3 & 84.5 & 60.6 & 80.4 & 1956.2 & 48.6 & 94.6 & 75.2 & 60.5 & 87.9 & {97.9} \\
\bottomrule
\end{tabular}
}
\caption{\textbf{Effect of pruning layers on LLaVA-OneVision-7B.} Reported scores indicate absolute performance on each benchmark. Avg. (\%) represents the average relative performance normalized by the full-token baseline.}
\label{tab:layer-ablation-results}
\end{table*}

\subsection{Ablation Study of RoPE}
\label{supp:ablation_rope}
Rotary Position Embedding (RoPE) \cite{su2024roformer} is a positional encoding widely used in Large Language Models (LLMs). 
We conduct an ablation study to examine its impact on our two pruning metrics: Information Magnitude (Eq.~10) and Information Duplication (Eq.~12).
Our default pruning configuration (Setting 1) excludes RoPE when computing Information Magnitude, but incorporates it for Information Duplication. 
We compare this against three variants, with results summarized in \Cref{tab:ablation-score-sim}.

We observe two findings from the results. 
First, excluding RoPE from Information Magnitude is beneficial. Including RoPE (Setting 2) degrades performance by 0.9 pp compared to the default setting (Setting 1).
% Given that RoPE influences query–key alignment without changing the intrinsic dual-weight magnitude, excluding RoPE when measuring query–key alignment leads to a more accurate estimation of the information magnitude.
Since RoPE applies a rotation that preserves vector norms, excluding RoPE when measuring query–key alignment leads to a more accurate estimation of the information magnitude.
Second, incorporating RoPE is essential for measuring Information Duplication. 
Removing RoPE from the duplication computation (Setting 4) leads to a substantial performance drop by 3.3 pp relative to Setting 1.
This indicates that identifying redundant tokens requires considering not only semantic similarity but also the relative positions of tokens in the sequence.
Specifically, we argue that incorporating positional information reflects the inductive bias that tokens at different locations have distinct information.

\begin{table*}[!t]
\centering
% \caption{\textbf{Performance comparison of different pruning methods on Qwen2.5-VL-32B.}
% Scores indicate absolute performance, while Avg. (\%) represents the relative average normalized by the full-token baseline.
% }
\resizebox{\linewidth}{!}{
\begin{tabular}{l cccccccccc |c}
\toprule
\textbf{Method} & \textbf{AI2D} & \textbf{DocVQA} & \textbf{InfoVQA} & \textbf{MMBench} & \textbf{MME} & \textbf{MMMU} & \textbf{SciQA} & \textbf{TextVQA} & \textbf{MMStar} & \textbf{POPE} & \textbf{Avg.(\%)} \\
\midrule

\rowcolor{gray!10}
\multicolumn{12}{c}{\textit{Upper Bound, All Tokens} ($\mathbf{100\%}$)} \\

Qwen2.5-VL-32B 
% & 82.4 & 94.6 & 80.4 & 84.2 & 2310.2 & 50.7 & 88.3 & 82.8 & 62.3 & 87.6 & 100.0 \\
& 84.5 & 93.0 & 80.6 & 86.6 & 2433.9 & 58.7 & 91.4 & 77.4 & 64.6 & 84.7 & 100.0 \\

\rowcolor{gray!10}
\multicolumn{12}{c}{\textit{Token Budget: 35.3\%} \textcolor{Green}{($\downarrow\mathbf{64.7\%}$)}} \\

FastV
& 78.6 & 83.8 & 62.0 & 80.8 & 2273.3 & 57.1 & 85.0 & 72.8 & 55.8 & 76.4 & 90.8 \\

VisionZip
& 81.9 & 84.5 & 60.4 & 85.1 & 2316.3 & 58.6 & 88.7 & 71.7 & 61.8 & 83.7 & 94.0 \\

DivPrune
& 82.2 & 82.3 & 59.3 & 85.1 & 2360.5 & 57.4 & 88.9 & 73.1 & 59.6 & 82.9 & 93.4 \\

CDPruner
& 78.8 & 71.7 & 51.0 & 82.4 & 2208.9 & 56.4 & 85.2 & 69.1 & 56.3 & 75.9 & 87.2 \\

\rowcolor{gray!40}
\textbf{Ours} 
& 82.2 & 82.1 & 59.2 & 84.2 & 2381.6 & 58.9 & 90.0 & 75.7 & 61.0 & 82.1 & 94.2 \\

\rowcolor{gray!10}
\multicolumn{12}{c}{\textit{Token Budget: 22.2\%} \textcolor{Green}{($\downarrow\mathbf{77.8\%}$)}} \\

FastV 
& 75.3 & 74.3 & 51.9 & 77.6 & 2083.7 & 55.6 & 83.2 & 69.3 & 50.0 & 66.8 & 84.0 \\

VisionZip 
& 78.9 & 67.9 & 44.0 & 81.5 & 2123.9 & 57.8 & 87.6 & 62.9 & 57.0 & 81.0 & 86.2 \\

DivPrune
& 77.9 & 71.7 & 47.2 & 82.7 & 2234.2 & 54.7 & 86.8 & 68.6 & 57.1 & 81.5 & 87.7 \\

CDPruner
& 72.9 & 60.1 & 42.9 & 78.9 & 2079.5 & 56.1 & 83.1 & 63.3 & 51.8 & 71.1 & 81.3 \\

\rowcolor{gray!40}
\textbf{Ours} 
& 79.4 & 73.5 & 50.0 & 81.9 & 2321.2 & 57.7 & 87.6 & 73.7 & 56.4 & 79.9 & 89.6 \\

\rowcolor{gray!10}
\multicolumn{12}{c}{\textit{Token Budget: 11.1\%} \textcolor{Green}{($\downarrow\mathbf{88.9\%}$)}} \\

FastV
& 70.4 & 55.7 & 36.9 & 72.3 & 1805.8 & 54.3 & 80.2 & 61.7 & 44.0 & 48.3 & 73.2 \\

VisionZip
& 73.0 & 44.5 & 32.3 & 75.6 & 1921.9 & 56.8 & 84.5 & 45.6 & 49.1 & 73.2 & 75.1 \\

DivPrune
& 72.4 & 53.0 & 34.6 & 77.0 & 2093.0 & 54.8 & 82.5 & 59.9 & 50.3 & 75.5 & 78.9 \\

CDPruner
& 67.9 & 43.4 & 32.8 & 71.0 & 1879.2 & 54.3 & 79.3 & 52.0 & 44.5 & 63.8 & 71.8 \\

\rowcolor{gray!40}
\textbf{Ours} 
& 74.0 & 58.1 & 38.4 & 77.9 & 2143.4 & 54.6 & 83.5 & 68.6 & 50.6 & 74.5 & 81.5 \\

\bottomrule
\end{tabular}
}
\caption{\textbf{Performance comparison of different pruning methods on Qwen2.5-VL-32B.} Reported scores indicate absolute performance on each benchmark. Avg. (\%) represents the average relative performance normalized by the full-token baseline.}
\label{tab:qwen2.5-vl-32b}
\end{table*}

\subsection{Analysis on Pruning Layer Selection}
\label{supp:pruning_layer_selection}
In LVLMs, token pruning is performed by compressing the visual token set at a specific layer of the LLM, after which the compressed visual tokens are concatenated with the text tokens and passed to the subsequent layers.
Pruning at earlier layers leads to larger reductions in computation and memory usage during the prefilling stage. However, excessively early pruning may discard tokens that become important in later layers, potentially degrading the model's perception and reasoning capabilities. Therefore, identifying the optimal trade-off between efficiency and performance is crucial.
To this end, we analyze both the performance variation across pruning layers and the consistency of the selected tokens across layers.

\subsubsection{Performance Variation across Pruning Layers.}
We measure model performance across various pruning layers. 
\Cref{tab:layer-ablation-results} shows that pruning at Layer~0 and Layer~2 results in significant performance drops, retaining only 93.2\% and 94.8\% of the baseline performance, respectively. 
In contrast, once pruning is applied from Layer~4 onward, the performance remains consistently around 97.8\% - 97.9\%, with no additional degradation.

\begin{table*}[!t]
\centering
% \caption{\textbf{Effect of Progressive Chunked MMR hyperparameter on LLaVA-OneVision-7B.} 
% All scores except MME and MMMU are scaled to 100 for consistency. Avg. (\%) represents the arithmetic mean of the relative performance across all 10 benchmarks compared to the Baseline. Other Hyperparameters are fixed, varying only one factor at a time, with the default configuration indicated in \textbf{bold}.}
\resizebox{\linewidth}{!}{
\begin{tabular}{l cccccccccc|c}
\toprule
\textbf{Method} & \textbf{AI2D} & \textbf{DocVQA} & \textbf{InfoVQA} & \textbf{MMBench} & \textbf{MME} & \textbf{MMMU} & \textbf{SciQA} & \textbf{TextVQA} & \textbf{MMStar} & \textbf{POPE} & \textbf{Avg. (\%)} \\
\midrule
\rowcolor{gray!10}
\textit{Baseline} & 81.3 & 87.1 & 66.0 & 80.7 & 1992.0 & 49.2 & 95.9 & 75.8 & 61.9 & 88.3 & {100.0} \\
\midrule
\rowcolor{blue!5} \multicolumn{12}{l}{\textit{$\lambda$}} \\
1  & 80.3 & 85.0 & 60.2 & 79.4 & 1963.3 & 49.4 & 92.8 & 75.2 & 59.8 & 88.4 & {97.8} \\
\textbf{5} & 80.5 & 85.3 & 60.6 & 79.6 & 1985.0 & 48.9 & 92.9 & 75.2 & 60.0 & 88.1 & {97.9} \\
10  & 79.9 & 83.6 & 59.7 & 80.0 & 1959.6 & 48.6 & 93.3 & 74.6 & 60.4 & 88.4 & {97.4} \\
\midrule
\rowcolor{green!5} \multicolumn{12}{l}{\textit{$g$}} \\
1 & 80.4 & 83.0 & 58.6 & 80.0 & 1980.7 & 48.3 & 94.1 & 74.6 & 60.9 & 87.8 & {97.4} \\
\textbf{2} & 80.5 & 85.3 & 60.6 & 79.6 & 1985.0 & 48.9 & 92.9 & 75.2 & 60.0 & 88.1 & {97.9} \\
4 & 80.4 & 84.2 & 59.5 & 79.6 & 1969.6 & 49.2 & 92.9 & 75.0 & 59.7 & 87.9 & {97.5} \\
8 & 80.3 & 84.7 & 60.0 & 79.7 & 1991.4 & 49.6 & 92.1 & 75.3 & 59.1 & 88.0 & {97.7} \\
\midrule
\rowcolor{orange!5} \multicolumn{12}{l}{\textit{$b_0$}} \\
1  & 80.1 & 83.7 & 59.6 & 80.3 & 1951.3 & 48.0 & 93.2 & 74.8 & 60.5 & 88.5 & {97.4} \\
\textbf{2} & 80.5 & 85.3 & 60.6 & 79.6 & 1985.0 & 48.9 & 92.9 & 75.2 & 60.0 & 88.1 & {97.9} \\
8  & 80.0 & 83.6 & 59.1 & 79.7 & 1946.7 & 48.4 & 93.5 & 74.8 & 60.2 & 88.2 & {97.2} \\
64 & 80.6 & 83.6 & 60.0 & 79.3 & 1962.2 & 48.7 & 92.9 & 75.2 & 60.2 & 88.3 & {97.5} \\
\bottomrule
\end{tabular}
}
\caption{\textbf{Effect of Progressive Chunked MMR hyperparameters on LLaVA-OneVision-7B.} Default configurations are indicated in \textbf{bold}, with other hyperparameters fixed while varying one factor at a time. Reported scores indicate absolute performance on each benchmark. Avg. (\%) represents the average relative performance normalized by the full-token baseline.}
\label{tab:llava-onevision-pruning-relative}
\end{table*}

\subsubsection{Consistency of Selected Tokens across Layers.}
\begin{figure*}[!h]
  \centering
  \resizebox{0.9\textwidth}{!}{\includegraphics[]{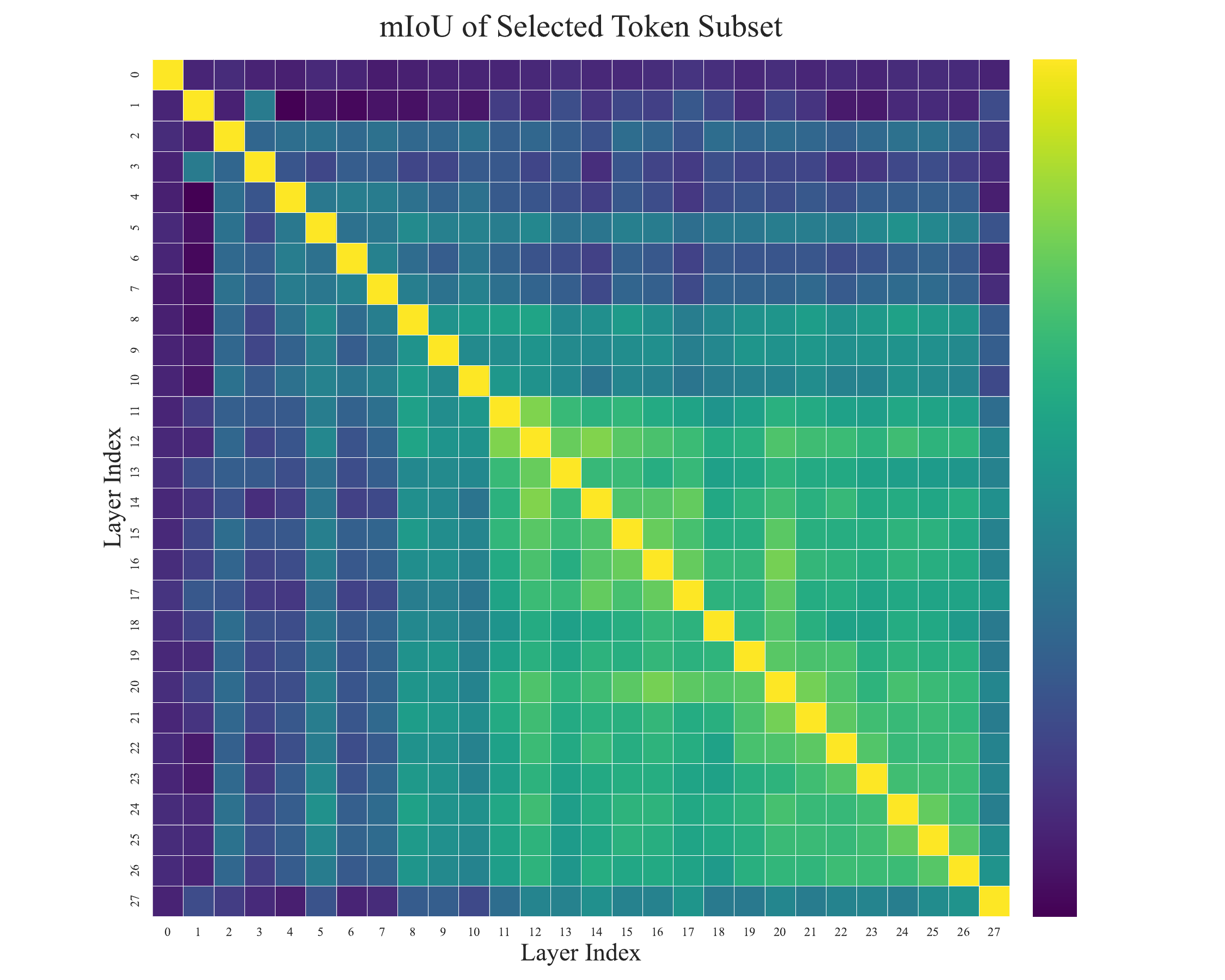}}
  \caption{\textbf{Consistency of selected tokens across layers.} We visualize the mean Intersection over Union (mIoU) of the visual token subsets selected by our pruning method at different transformer layers.}
  \label{fig:supp_important_iou}
\end{figure*}
To examine whether early-layer pruning removes tokens that become important in deeper layers, we measure the mIoU between the token sets selected by our method at different layers with LLaVA-OneVision-7B and ScienceQA dataset.
\Cref{fig:supp_important_iou} shows that token sets selected at very early layers (e.g., Layer~0 and Layer~1) exhibit relatively low similarity to those selected at later layers. In contrast, the sets selected at Layer~4 show consistently high similarity to those of deeper layers.
% Notably, the token sets selected from Layers~4--7 show relatively low similarity to the token set selected at the final layer (Layer~27). 
% We attribute this to the fact that very deep layers are closer to the next-token prediction head, where token importance becomes increasingly specialized for generation.
Notably, similarity decreases at the final layer (Layer 27), which we attribute to the alignment of representations with the next-token prediction objective.

Nevertheless, as shown in the preceding experiment, this discrepancy in similarity does not lead to a meaningful drop in final task performance.

Based on these observations, we select Layer~4 as the optimal pruning layer, as it provides the best balance between computational efficiency and performance.

\subsection{Additional Results: Scalability to Larger Model}
\label{supp:scalability}
To evaluate the scalability of the proposed pruning framework, we conduct additional experiments on the larger Qwen2.5-VL-32B model.
As shown in \Cref{tab:qwen2.5-vl-32b}, our method achieves the best performance across all evaluated token budgets. 
In particular, under more constrained settings with token budgets of 22.2\% and 11.1\%, our approach shows significantly better performance than the baselines.
These results suggest that the proposed dual-form-based token selection criterion remains effective even in larger models.

\subsection{Ablation Study on Progressive Chunked MMR Hyperparameters}
\label{supp:pcmmr_hyperparams}

We perform an ablation on three hyperparameters of Progressive Chunked MMR: the penalty strength $\lambda$, the chunk growth factor $g$, and the initial chunk size $b_0$. The results are reported in \Cref{tab:llava-onevision-pruning-relative}.
For $\lambda$, an excessively large value (e.g., $\lambda=10$) causes over-suppression, where even informative tokens receive overly strong penalties, degrading performance from 97.9\% to 97.4\%. 
For the growth factor $g$, large values cause redundant tokens to be selected before redundancy penalties sufficiently suppress their importance scores.
In contrast, $g=1$ overly emphasizes redundancy reduction, resulting in overly conservative selection.
Empirically, $g=2$ achieves the best performance.
Finally, when the initial chunk size $b_0$ is too large (e.g., 64), the initial selection already contains many redundant tokens, which degrades performance.

% \subsection{Value norm이 잘 안되는 상황에 관한 분석}
% Attention score가 uniform하다고 가정했을때, (혹은 극소수의 token에만 높은 attention score가 있고, 나머지는 noise) 인 상황에서 context length에 비례해서 normalization term의 크기가 커진다는 가설 검증

\subsection{Qualitative Visualization}
\label{supp:qualitative}
To qualitatively analyze the impact of our token pruning method on text generation, we evaluate the model's descriptive capabilities under varying token budgets. We utilize two images from the MME benchmark, prompting the model to provide a detailed description of each image.

As illustrated in \Cref{fig:supp_qualitative}, our approach preserves the core semantic details present in the responses generated by the unpruned vanilla model. Specifically, in \Cref{fig:supp_qualitative_sample1}, we observe that visual tokens corresponding to crucial objects, such as the baseball player and the ball, are preserved after token pruning. By retaining these highly informative tokens, the model consistently generates fine-grained descriptions of the scene, including the uniform, the player's name, and the background context, across all evaluated token budgets.
Similarly, \Cref{fig:supp_qualitative_sample2} demonstrates the model's ability to retain fine-grained details for an image featuring a beer bottle and a glass. 
Our method successfully preserves both the textual and visual information of these objects. 
For instance, specific elements such as the text `since 1890' on the bottle label, the Heineken branding on the glass to the right, and the surrounding background are accurately described regardless of the token budget constraint.

These visualizations show that our method selectively preserves highly informative visual tokens even at lower token budgets.

\subsection{Comparison between PC-MMR and MMR}

We compare exact greedy MMR with Algorithm 1, in respect to accuracy and efficiency.
We considered both of the following two variants of greedy-MMR: (a) Exact, mult. : the same multiplicative penalty, which isolates the effect of chunking, and (b) Exact, add. : the additive form of \Cref{sec:dual-form based token pruning}, which results in $P_i/\max P - \gamma\max_{j\in C}S_{ij}$ with $\gamma{=}0.5$. 
% Below table report averaged accuracy over ScienceQA-IMG, TextVQA, and POPE (Qwen2.5-VL-7B).
% -> table 캡션으로 옮김

\begin{table}[!h]
\centering
\small
\begin{tabular}{lccc}
\toprule
\textbf{Budget} & \textbf{PC-MMR (ours)} & \textbf{Exact, mult.} & \textbf{Exact, add.} \\
\midrule
35.3\% & 84.92 & 85.10 & 84.50 \\
11.1\% & 79.46 & 79.46 & 79.69 \\
\bottomrule
\end{tabular}
\caption{Comparison of PC-MMR with exact multiplicative and additive variants under different token budgets. Average accuracies over ScienceQA-IMG, TextVQA, and POPE (Qwen2.5-VL-7B) are reported.}
\label{tab:exact_comparison}
\end{table}
As shown in \Cref{tab:exact_comparison}, chunking costs at most 0.2pp, and the sign is not even consistent across budgets. 

To validate the efficiency of PC-MMR, we measure each algorithm's latency on single H100: 0.055s per sample for PC-MMR vs. 0.162s for exact greedy on DocVQA at 35.3\% (about 2k visual tokens), and 0.016s vs. 0.027s on ScienceQA (about 0.3K visual tokens). PC-MMR is 1.7–2.9× faster, and the gap grows with because exact greedy needs sequential argmax operations. This is also why \Cref{alg:parallel_mmr} uses a multiplicative penalty: it accumulates across chunks in a streaming fashion with memory per candidate.

\section{Ethical considerations}
\paragraph{License of Existing Assets.}The models used in this work, LLaVA-OneVision-7B and Qwen2.5-VL-7B/32B, are released under the Apache 2.0 license. The evaluation benchmarks are used under their respective licenses: AI2D (CC BY-SA 4.0), TextVQA (CC BY 4.0), ScienceQA (CC BY-NC-SA 4.0), MMMU and MMBench (Apache 2.0), POPE (MIT), NExT-QA (MIT), and EgoSchema (Ego4D License). DocVQA, InfoVQA, MME, MMStar, Video-MME, and MLVU are used under their respective research-use terms. All assets are used in a manner consistent with their intended academic, non-commercial use.

\paragraph{LLM Usage.}
We used AI assistants for grammar checking during the writing process. 
All scientific contributions in this paper are the original work of the authors.

% Consequently, the generated responses maintain their semantic integrity even at lower token budgets.
\begin{figure*}[!h]
% \begin{figure*}[h]
    \centering
    \begin{subfigure}[t]{0.9\textwidth}
        \centering
        \includegraphics[width=\linewidth]{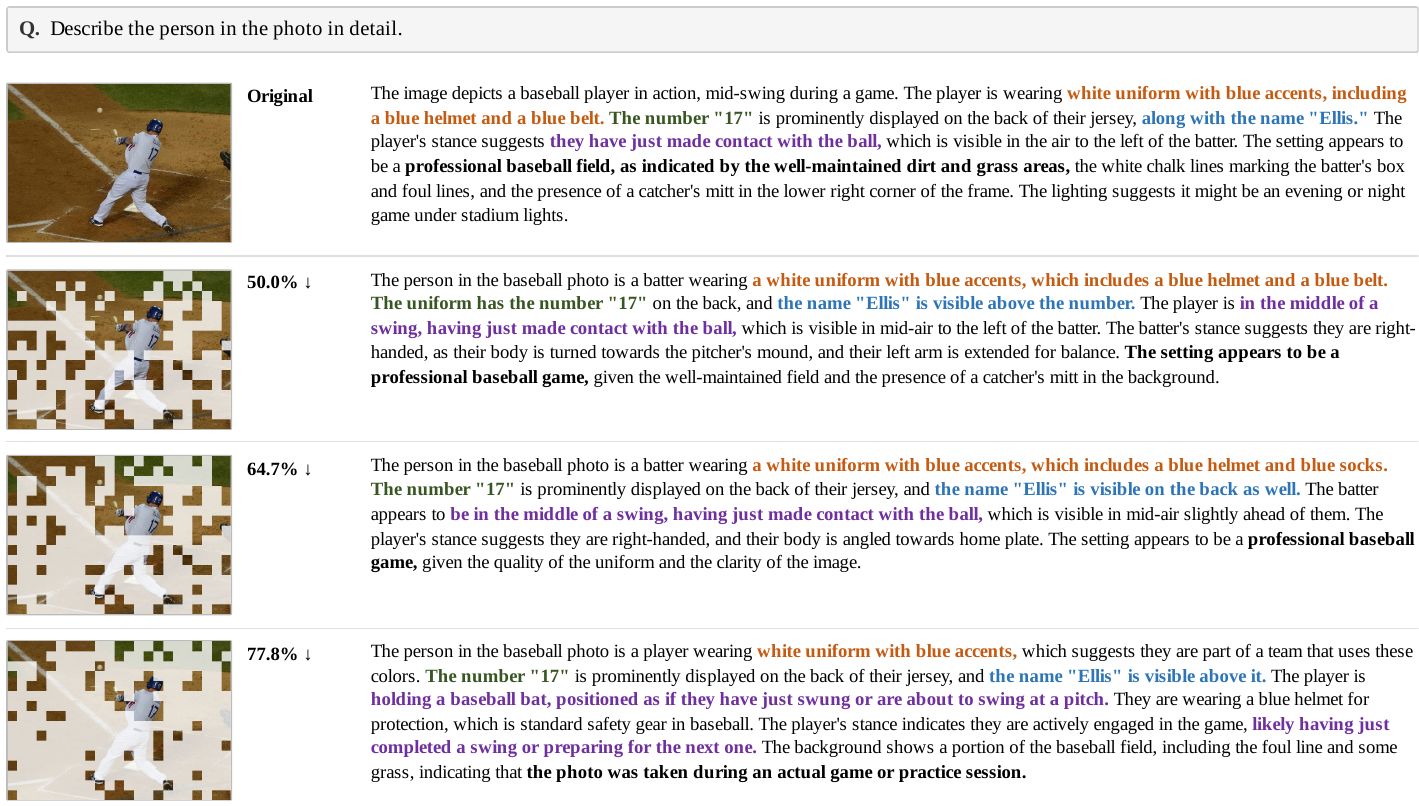}
        \caption{Qualitative results for Sample 1.}
        \label{fig:supp_qualitative_sample1}
    \end{subfigure}
    \vfill
    \begin{subfigure}[t]{0.9\textwidth}
        \centering
        \includegraphics[width=\linewidth]{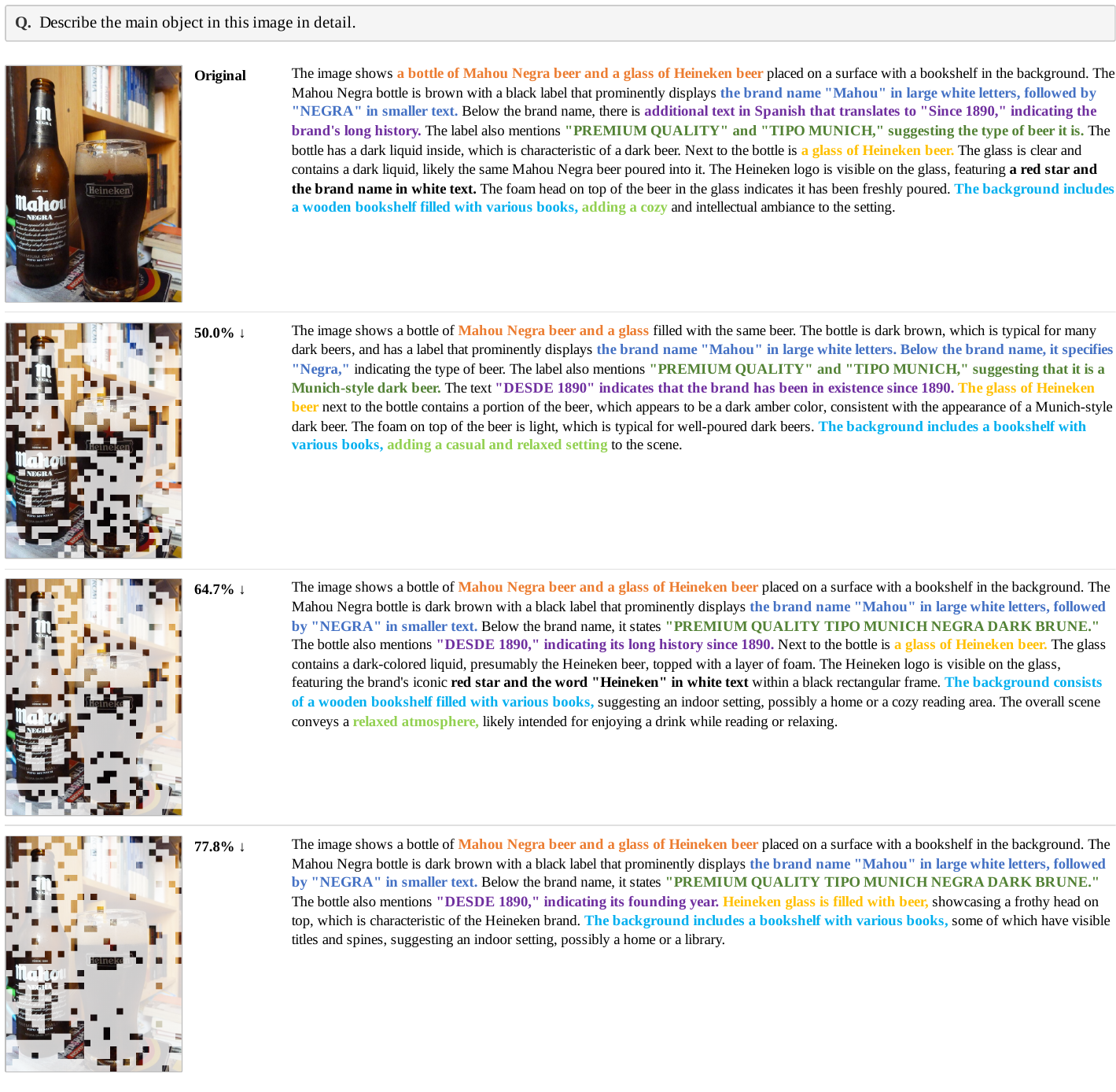}
        \caption{Qualitative results for Sample 2.}
        \label{fig:supp_qualitative_sample2}
    \end{subfigure}
    \caption{
    \textbf{Qualitative visualization of token pruning.} We compare the generated descriptions from the unpruned baseline model with those from our proposed token pruning method under varying token budgets. Corresponding expressions are highlighted in the same color for clarity.
    }
    \label{fig:supp_qualitative}
\end{figure*}

\end{document}